\documentclass[sigconf,nonacm]{aamas}

\usepackage{balance} 
\usepackage{multirow}
\usepackage{array}
\usepackage{xspace}
\usepackage{tikz}
\usetikzlibrary{positioning,shapes.geometric,arrows.meta,shadows}

\setcopyright{none}
\AtBeginDocument{%
  \fancypagestyle{standardpagestyle}{\fancyhf{}}%
  \fancypagestyle{firstpagestyle}{\fancyhf{}}%
  \pagestyle{standardpagestyle}}
\AtEndPreamble{\excludecomment{acks}}

\title[Testing the Planning Claims of LLM-Based Agents]{Testing the Planning Claims of LLM-Based Agents against~Formal~Semantics}

\makeatletter
\newcommand{\numberedauthor}[3]{%
  \parbox[t]{0.315\textwidth}{\centering
    {\sffamily\bfseries\normalsize #1\textsuperscript{#2}\par}%
    \vspace{2pt}%
    {\sffamily\scriptsize\href{mailto:#3}{\mbox{#3}}\par}}}
\newcommand{\numberedaffiliation}[2]{%
  \parbox[t]{0.47\textwidth}{\raggedright\sffamily\footnotesize
    \textsuperscript{#1}\,#2\par}}
\def\@mkauthors@iii{%
  \global\setbox\mktitle@bx=\vbox{%
    \unvbox\mktitle@bx\par\vskip 1em
    \hbox to \textwidth{%
      \numberedauthor{Oriol Miro-Lopez-Feliu}{1}{oriol.miro.07@gmail.com}\hfil
      \numberedauthor{Oscar Miró López-Feliu}{2,3}{oscar.miro.lopez.feliu@student.uva.nl}\hfil
      \numberedauthor{Victor Gimenez-Abalos}{1,4}{victor.gimenez@bsc.es}%
    }%
    \vskip 1.2em
    \hbox to \textwidth{%
      \numberedauthor{Javier Vazquez-Salceda}{4}{jvazquez@cs.upc.edu}\hfil
      \numberedauthor{Victor Carballo Araruna}{1}{victor.carballo@bsc.es}\hfil
      \numberedauthor{Sergio Alvarez-Napagao}{1,4}{sergio.alvarez@bsc.es}%
    }%
    \vskip 1.2em
    \hrule height 0.25pt
    \vskip 0.6em
    \hbox to \textwidth{%
      \numberedaffiliation{1}{Barcelona Supercomputing Center, Barcelona, Spain}\hfil
      \numberedaffiliation{2}{University of Amsterdam, Amsterdam, The Netherlands}%
    }%
    \vskip 0.45em
    \hbox to \textwidth{%
      \numberedaffiliation{3}{Safe AI Netherlands, The Netherlands}\hfil
      \numberedaffiliation{4}{Universitat Politècnica de Catalunya, Barcelona, Spain}%
    }%
    \par\vskip 1.2em
  }%
}
\makeatother

\author{Oriol Miro-Lopez-Feliu}
\affiliation{
  \institution{Barcelona Supercomputing Center}
  \city{Barcelona}
  \country{Spain}}
\email{oriol.miro.07@gmail.com}

\author{Oscar Miró López-Feliu}
\affiliation{
  \institution{University of Amsterdam}
  \city{Amsterdam}
  \country{The Netherlands}}
\affiliation{
  \institution{Safe AI Netherlands}
  \country{The Netherlands}}
\email{oscar.miro.lopez.feliu@student.uva.nl}

\author{Victor Gimenez-Abalos}
\affiliation{
  \institution{Barcelona Supercomputing Center}
  \institution{Universitat Politècnica de Catalunya}
  \city{Barcelona}
  \country{Spain}}
\email{victor.gimenez@bsc.es}

\author{Javier Vazquez-Salceda}
\affiliation{
  \institution{Universitat Politècnica de Catalunya}
  \city{Barcelona}
  \country{Spain}}
\email{jvazquez@cs.upc.edu}

\author{Victor Carballo Araruna}
\affiliation{
  \institution{Barcelona Supercomputing Center}
  \city{Barcelona}
  \country{Spain}}
\email{victor.carballo@bsc.es}

\author{Sergio Alvarez-Napagao}
\affiliation{
  \institution{Barcelona Supercomputing Center}
  \institution{Universitat Politècnica de Catalunya}
  \city{Barcelona}
  \country{Spain}}
\email{sergio.alvarez@bsc.es}

\newif\ifdraft
\draftfalse

\newcommand{\UNSAT}{\texttt{UNSOLVABLE}\xspace}
\newcommand{\dom}[1]{\textsc{#1}\xspace}
\newcommand{\tool}[1]{\texttt{#1}}
\newcommand{\Lstar}{L^{\star}}
\newcommand{\etal}{et~al.\xspace}
\newcommand{\eg}{e.g.,\xspace}

\usepackage{listings}
\usepackage[most]{tcolorbox}
\usepackage{colortbl}

\definecolor{accent}{HTML}{0072B2}
\definecolor{accentdark}{HTML}{00466E}
\definecolor{sky}{HTML}{56B4E9}
\definecolor{epistemic}{HTML}{0072B2}
\definecolor{executed}{HTML}{009E73}
\definecolor{illformed}{HTML}{E69F00}
\definecolor{fatal}{HTML}{D55E00}
\definecolor{terminal}{HTML}{CC79A7}
\definecolor{vermilion}{HTML}{D55E00}
\definecolor{placeholder}{HTML}{A0467A}
\definecolor{offlinetint}{HTML}{EAF3FA}
\definecolor{onlinetint}{HTML}{FBEDE5}
\definecolor{breakgrey}{gray}{0.6}
\definecolor{commentgrey}{gray}{0.5}

\newcommand{\minihead}[1]{\par\medskip{\sffamily\bfseries\color{accentdark}#1}\par\nopagebreak\vspace{1pt}}

\lstdefinestyle{base}{basicstyle=\ttfamily\footnotesize, columns=fullflexible, keepspaces=true, aboveskip=0pt, belowskip=0pt,
  breaklines=true, breakatwhitespace=false, showstringspaces=false, upquote=true, extendedchars=true,
  postbreak=\mbox{\textcolor{breakgrey}{$\hookrightarrow$}\space},
  literate={—}{{---}}1 {–}{{--}}1 {’}{{'}}1 {‘}{{`}}1 {“}{{``}}1 {”}{{''}}1}
\lstdefinelanguage{PDDL}{sensitive=false, alsoletter={:-?}, morecomment=[l]{;}, keywordsprefix=?, morekeywords=[1]{nokeyword},
  morekeywords=[2]{define,domain,problem,and,not,or,increase,decrease,forall,exists,when,either},
  morekeywords=[3]{:requirements,:types,:predicates,:functions,:action,:parameters,:precondition,:effect,
    :objects,:init,:goal,:metric,:domain,minimize,:strips,:typing,:negative-preconditions,:equality,
    :fluents,:disjunctive-preconditions}}
\lstdefinestyle{pddl}{style=base, language=PDDL, keywordstyle=[1]\color{vermilion},
  keywordstyle=[2]\color{accent}\bfseries, keywordstyle=[3]\color{accentdark}\bfseries, commentstyle=\color{commentgrey}}
\lstdefinestyle{brief}{style=base, moredelim=[l][\color{accentdark}\bfseries]{\#}}
\lstdefinestyle{prompt}{style=base, moredelim=[s][\color{placeholder}\bfseries]{<}{>}}
\lstdefinestyle{dspy}{style=base, moredelim=[s][\color{accent}\bfseries]{[[}{]]}}
\lstdefinestyle{obs}{style=base}

\tcbset{appbox/.style={enhanced, breakable, boxrule=0.6pt, arc=1.5pt, left=5pt, right=5pt, top=3pt, bottom=3pt,
    before skip=5pt, after skip=8pt, fonttitle=\sffamily\bfseries\small, toptitle=1.5pt, bottomtitle=1.5pt},
  epistemic/.style={colframe=epistemic, colback=epistemic!3},
  executed/.style={colframe=executed, colback=executed!4},
  illformed/.style={colframe=illformed, colback=illformed!5},
  fatal/.style={colframe=fatal, colback=fatal!4},
  terminal/.style={colframe=terminal, colback=terminal!5}}
\newtcbinputlisting{\pddlfile}[2]{appbox, colframe=accent, colback=accent!2, title={#1},
  listing file={#2}, listing only, listing options={style=pddl}}
\newtcbinputlisting{\brieffile}[2]{appbox, colframe=sky, colback=sky!7, coltitle=black, title={#1},
  listing file={#2}, listing only, listing options={style=brief}}
\newtcbinputlisting{\plainfile}[2]{appbox, colframe=black!55, colback=black!2, title={#1},
  listing file={#2}, listing only, listing options={style=obs}}
\newtcbinputlisting{\toolcall}[3]{appbox, #1, title={\ttfamily #2}, listing file={#3}, listing only,
  listing options={style=obs}}
\newtcbinputlisting{\promptfile}[3]{appbox, colframe=black!65, colback=black!2,
  title={#1\hfill{\normalfont\footnotesize #2}}, listing file={#3}, listing only, listing options={style=prompt}}
\newtcbinputlisting{\msgfile}[2]{appbox, colframe=accentdark, colback=accent!2, title={#1},
  listing file={#2}, listing only, listing options={style=dspy}}
\newcommand{\stepbox}[6]{%
  \begin{tcolorbox}[appbox, #1, title={Step #2\quad\texttt{#3}}]
  {\small\itshape\color{black!70}#4}\par\smallskip
  {\small\ttfamily\bfseries\color{#1}#5}\par\smallskip
  \lstinputlisting[style=obs]{#6}
  \end{tcolorbox}}
\newcommand{\legendchip}[2]{\tcbox[on line, colframe=#1, colback=#1!8, boxrule=0.5pt, arc=1pt, left=2pt, right=2pt,
  top=0.5pt, bottom=0.5pt, boxsep=0pt]{\sffamily\scriptsize #2}}

\begin{abstract}
LLM-based agents are increasingly credited with the competences of practical reasoning: planning, monitoring, and correcting their own actions. Their success is attributed to the prompting strategies and agent architectures built around the model, such as chain-of-thought, search-based prompting, self-refinement, reflection, critics and tool-using loops. Yet these claims are usually supported by end-to-end success rates, which show only whether a task was completed, or by LLM judges, which are not verifiers.
We test LLM planning against formal semantics. Tasks are posed in natural language, but each is grounded in a hidden PDDL model that decides whether every action is applicable, whether the goal is reached, and whether a plan exists at all.
Across five domains (two of them new, all released with the paper), four models, six prompting strategies and three agent architectures, the best configuration solves 78\% of the solvable tasks offline but only 40\% online, where agents mostly act without inspecting the state.
Unsolvable tasks are recognised in under 10\% of episodes, and heavier prompting strategies use 5 to 22 times as many tokens as direct prompting with no clear gain.
In our experiments, planning performance varied more across models than across prompting strategies or agent architectures.
Claims that agents can plan require evidence from validated action traces.
\end{abstract}

\keywords{LLM-based agents; practical reasoning; planning and acting; plan validation; PDDL; epistemic actions; infeasibility; agent evaluation}

\begin{document}

\maketitle

\section{Introduction}
\label{sec:intro}

Large language models (LLMs) are increasingly deployed as the core of \emph{LLM agents} in real applications. Without changing the model's parameters, developers embed it in two kinds of structure. \emph{Prompting strategies} elicit intermediate reasoning~\cite{wei2022cot}, critique and revise drafts, or search over candidate continuations; \emph{agent architectures} run control loops that call tools and act on an environment. In the vocabulary of practical reasoning~\cite{bratman1987intention,wooldridge2009introduction}, these strategies and architectures are credited with deliberation, means--ends reasoning, and the monitoring and repair of the agent's own course of action. In ReAct, reasoning traces ``help the model induce, track, and update action plans''~\cite{yao2023react}. Tree of Thoughts is described as enabling ``deliberate decision making'' on planning and search tasks~\cite{yao2023tot}. Self-refinement, reflection and tool-based critics are presented as forms of self-monitoring and correction~\cite{madaan2023selfrefine,shinn2023reflexion,gou2024critic}. LLM agent architectures for planning and agentic workflows are now also studied within the agents community~\cite{choi2026reactree,hsiao2026procedural}, alongside calls to ground agentic AI in established agent concepts~\cite{dignum2025agentifying}.

Such claims concern properties of behaviour: that an agent respects constraints at every step, reaches its goals over long horizons, corrects its own errors, and recognises when a goal cannot be achieved. The evidence offered for them is usually an end-to-end success rate~\cite{liu2024agentbench,zhou2024webarena,mialon2024gaia}, a judgement by another LLM~\cite{zheng2023judging}, or the persuasiveness of a generated rationale, which need not reflect the computation behind the answer~\cite{turpin2023unfaithful}. None of these checks the properties being claimed. A trajectory may reach the goal after an illegal intermediate action. A run may fail because of a single violated precondition or because it never approached the goal. An agent may exhaust its budget pursuing a goal that was unachievable from the outset. Outcome-only evaluation conflates these cases, and surveys and benchmark studies of agent evaluation note the lack of fine-grained, diagnostic signals~\cite{yehudai2025surveyagents,ma2024agentboard}.

Automated planning provides the formal criteria for testing these claims. A plan is valid only if every action is applicable in the state in which it is executed and the final state satisfies the goal~\cite{ghallab2004automated,geffner2013concise}. Validity can be checked mechanically~\cite{howey2004val}, and unsolvability can be proved and certified~\cite{eriksson2017unsolvability}. Agent theory contributes a further distinction, which current usage of ``agentic'' blurs. An agent is characterised by the mapping from its history of interaction with an environment to its next action~\cite{russell2021aima,wooldridge2009introduction}. Many systems described as agentic never extend that history: they reason internally and then emit an answer. We therefore use \emph{LLM-based system} as the umbrella term and reserve \emph{agent} for systems that act on an environment, whether by sensing it or by changing it. \emph{Planning-by-acting}, in which a controller interleaves observation, planning and execution~\cite{ghallab2016acting}, constitutes a different regime. Within it, actions that inspect the world (epistemic actions) differ from actions that change it (ontic actions)~\cite{bolander2011epistemic}.

We test these planning claims against formal semantics. Each task is posed in natural language but grounded in a hidden PDDL model~\cite{mcdermott1998pddl}. In the \emph{offline} regime, a system receives the task description and must return a complete plan or declare the goal unachievable. In the \emph{online} regime, an agent must achieve the goal through a tool interface that separates epistemic from ontic actions, and an inapplicable action ends the episode as a safety violation (Figure~\ref{fig:protocols}). Since an online agent can always retrieve the task description, plan as an offline system would and then execute the plan, the online regime is never harder than the offline one, given sufficient budget (Proposition~\ref{prop:dominance}). Any shortfall of an online agent relative to offline planning with the same model therefore lies in the agent rather than in the environment.

Every plan and every executed action is validated against the hidden model. This allows us to measure executability, goal achievement, efficiency, infeasibility recognition and sensing behaviour, to locate the first failure in each trace, and to relate these outcomes to token usage and monetary cost. Validating LLM-generated plans against PDDL is an established practice~\cite{valmeekam2023planbench,stein2025autoplanbench}. What is new here is its use to test the competences claimed for prompting strategies and agent architectures, with unsolvable tasks and an online regime that separates sensing from acting.

We compare six prompting strategies and three agent architectures across four models from two families, in 1,800 validated episodes. The results provide little evidence that prompting strategies or agent architectures give LLM-based systems planning competence. Offline, the best configuration solves 78\% of the solvable tasks. Gains from prompting are model-specific: chain-of-thought and Plan-and-Solve raise success by up to 20 points for three of the four models, but no strategy improves on direct prompting for Gemini~2.0 Flash-Lite. No gain survives correction for multiple comparisons. Tree of Thoughts consumes an order of magnitude more tokens than direct prompting and is Pareto-dominated. Online, success declines for every model, to at most 40\%. Agents change the state far more often than they inspect it, and precondition violations terminate most failed episodes; neither reflective memory nor a critic improves on ReAct. In both regimes, systems rarely recognise that a goal is unachievable, yet they almost never abstain on solvable tasks. Overall, success varies more across models than across prompting strategies or agent architectures.

\paragraph{Contributions.}
\begin{enumerate}
    \item A formal evaluation framework that operationalises the planning claims made for LLM-based agents as properties checked against an explicit transition model, separates offline planning from online planning-by-acting through the interaction history, and shows that the online regime weakly dominates the offline one (Section~\ref{sec:framework}).
    \item An open benchmark and harness: 50 natural-language tasks backed by hidden PDDL models, in five domains (two of them new) and including ten unsolvable tasks; a validation oracle; an agent environment with epistemic and ontic actions under fatal-violation semantics, which logs every trace with cost accounting (Section~\ref{sec:benchmark}).
    \item A controlled study of six prompting strategies and three agent architectures under identical semantics, with failure-mode, sensing and cost--performance analyses (Sections~\ref{sec:setup} and~\ref{sec:results}).
\end{enumerate}

\begin{figure*}[t]
\centering
\begin{minipage}[b]{0.36\textwidth}
\centering
\resizebox{120pt}{!}{\begin{tikzpicture}[
    font=\small,
    node distance=0.55cm,
    data/.style={
        rectangle,
        draw=gray!45,
        fill=white,
        dashed,
        minimum width=3.85cm,
        minimum height=0.72cm,
        align=center,
        inner sep=2pt
    },
    block/.style={
        rectangle,
        draw=blue!65,
        fill=blue!5,
        thick,
        rounded corners=3pt,
        minimum width=3.85cm,
        minimum height=0.82cm,
        align=center,
        inner sep=2pt
    },
    decision/.style={
        diamond,
        draw=orange!75,
        fill=orange!8,
        thick,
        aspect=2.05,
        minimum width=2.85cm,
        minimum height=0.72cm,
        inner sep=1pt,
        align=center
    },
    outcome/.style={
        rectangle,
        rounded corners=3pt,
        draw=gray!40,
        fill=white,
        minimum width=1.55cm,
        minimum height=0.54cm,
        align=center,
        font=\bfseries\scriptsize,
        inner sep=2pt
    },
    line/.style={draw, -latex, thick, gray!78}
]

    \node[data] (input) {Briefs $B(\Pi)$};
    \node[block, below=of input] (llm) {\textbf{LLM-Based System}};
    \node[data, below=of llm] (plan) {Full Plan $\pi$ or \texttt{UNSOLVABLE}};
    \node[decision, below=0.57cm of plan] (eval) {Evaluator};

    \node[outcome, below left=0.62cm and 0.30cm of eval, text=green!55!black] (success) {Success};
    \node[outcome, below right=0.62cm and 0.30cm of eval, text=red!65!black] (fail) {Fail / Invalid};

    \path[line] (input) -- (llm);
    \path[line] (llm) -- (plan);
    \path[line] (plan) -- (eval);
    \path[line] (eval) -- node[left, font=\footnotesize, text=black, align=right, xshift=-3pt, yshift=3pt] {valid or\\correct $\bot$} (success);
    \path[line] (eval) -- node[right, font=\footnotesize, text=black, xshift=3pt, yshift=3pt] {Error} (fail);

\end{tikzpicture}}

\vspace{2pt}
{\small (a) Offline planning}
\end{minipage}%
\begin{minipage}[b]{0.56\textwidth}
\centering
\resizebox{236pt}{!}{\begin{tikzpicture}[
    font=\normalsize,
    >=latex,
    thick,
    agent/.style={
        rectangle,
        draw=blue!65,
        fill=blue!5,
        thick,
        minimum width=1.58cm,
        minimum height=0.84cm,
        rounded corners=3pt,
        font=\bfseries\small,
        align=center,
        inner sep=2pt
    },
    env_container/.style={
        rectangle,
        draw=green!55!black,
        fill=green!5,
        thick,
        minimum width=5cm,
        minimum height=3.05cm,
        rounded corners=5pt
    },
    validator/.style={
        diamond,
        draw=orange!75,
        fill=orange!10,
        thick,
        aspect=2.0,
        minimum width=1.58cm,
        minimum height=0.56cm,
        inner sep=1pt,
        font=\footnotesize\bfseries,
        align=center
    },
    state_node/.style={
        rectangle,
        draw=green!70!black,
        fill=white,
        dashed,
        thick,
        minimum width=1.65cm,
        minimum height=0.80cm,
        inner sep=2pt,
        font=\footnotesize,
        align=center
    },
    term/.style={
        rectangle,
        rounded corners=2pt,
        draw=gray!45,
        fill=white,
        minimum width=1.10cm,
        minimum height=0.55cm,
        font=\scriptsize,
        align=center,
        inner sep=2pt
    },
    path_ontic/.style={draw=orange!80!black, ->},
    path_epistemic/.style={draw=blue!80!black, ->},
    path_valid/.style={draw=green!60!black, ->},
    path_obs/.style={draw=gray!80, ->},
    path_error/.style={draw=red!75!black, ->, dashed},
    path_goal/.style={draw=blue!60!black, ->, dashed}
]

    \node[agent] (ag) {LLM-Based\\Agent};

    \node[env_container, right=0.50cm of ag] (env) {};
    \node[anchor=north west, font=\bfseries\scriptsize, text=green!38!black, inner sep=4pt]
        at (env.north west) {Environment};

    \node[validator] (val) at ([xshift=-0.50cm,yshift=0.70cm]env.center) {Validator};
    \node[state_node] (state) at ([xshift=-0.50cm,yshift=-0.76cm]env.center) {State /\\Simulator\\$s$};
    \node[term, text=red!70!black] (err) at ([xshift=1.35cm,yshift=0.70cm]env.center) {Precond.\\violation};
    \node[term] (goal) at ([xshift=1.35cm,yshift=-0.76cm]env.center) {Submit\\goal check};

    \coordinate (entry) at ([xshift=-0.08cm]env.west);
    \coordinate (split) at ([xshift=0.34cm]entry);

    \draw[draw=gray!80, ->] (ag.east) -- node[above, pos=0.50, font=\scriptsize] {$x_t$} (entry);
    \draw[path_ontic] (entry) -- (split) |- node[near end, above, font=\tiny] {Ontic} (val.west);
    \draw[path_epistemic] (entry) -- (split) |- node[near end, above, font=\tiny] {Epistemic} (state.west);

    \draw[path_error] (val.east) -- (err.west);
    \draw[path_valid] (val.south) -- node[right, font=\tiny] {Valid} (state.north);
    \draw[path_goal] (state.east) -- (goal.west);

    \draw[path_obs] (state.south)
        -- ++(0,-0.46) coordinate(obsbend)
        -| ([yshift=-0.46cm]ag.south)
        -- (ag.south);
    \node[anchor=north, font=\scriptsize] at ([xshift=-0.62cm]state.south |- obsbend) {Observation $o_t$};

\end{tikzpicture}%
}

\vspace{2pt}
{\small (b) Online planning-by-acting}
\end{minipage}
\caption{Evaluation protocols. (a) Offline, the planner maps the natural-language task to a complete plan or \UNSAT, which is validated once. (b) Online, the agent interacts through tools: epistemic calls return observations without changing the state, ontic calls are validated before the state changes, and an inapplicable action ends the episode.}
\label{fig:protocols}
\Description{Two diagrams side by side. The first shows a pipeline from the natural-language briefs to an LLM-based system, which outputs a full plan or UNSOLVABLE; an evaluator then marks it as success or failure. The second shows an LLM-based agent issuing calls to an environment. Epistemic actions go directly to the state and return observations. Ontic actions go through a validator: valid ones update the state, invalid ones cause a precondition violation. A submit action triggers the goal check.}
\end{figure*}

\section{Related Work}
\label{sec:related}

Much of the evidence on whether LLMs can plan comes from classical planning, where every plan can be validated. PlanBench~\cite{valmeekam2023planbench} and the accompanying critical investigation~\cite{valmeekam2023planning} established that LLM-generated plans often look plausible yet fail validation, especially once action and object names are obfuscated. Follow-up work found that chain-of-thought gains in Blocksworld depend on narrow, problem-specific prompts~\cite{stechly2024thoughtlessness}, that LLM self-critique degrades plans while sound external verifiers improve them~\cite{stechly2025selfverification}, and that reasoning models improve markedly but remain unreliable on larger and obfuscated problems~\cite{valmeekam2024lrm}. These findings motivated LLM-Modulo architectures, in which sound external verifiers carry correctness~\cite{kambhampati2024llmmodulo}, and hybrids in which LLMs write models or programs for planners~\cite{liu2023llmp,guan2023worldmodels,silver2024generalized}. We study the complementary setting of general-purpose LLM-based systems with no planner in the loop, which is where the planning claims above are made.

Subsequent benchmarks keep this formal grounding but change what is measured. ACPBench~\cite{kokel2025acpbench} probes atomic reasoning about applicability, progression and reachability rather than plan generation. Closest to our setting, AutoPlanBench~\cite{stein2025autoplanbench} translates PDDL domains into natural language with an LLM, as we do, and compares non-interactive planning with interactive ReAct-style action choice, while G\"obel \etal~\cite{gobel2026stepwise} expose a PDDL simulator to an LLM through tool calls on Blocksworld. We share their use of formal validation but ask a different question. In our setting, agents must choose between inspecting the state and changing it, and an inapplicable action is fatal, so that sensing and unsafe commitment become measurable. Unsolvable tasks further test whether systems recognise that a goal is unachievable, and Proposition~\ref{prop:dominance} attributes any online shortfall relative to offline planning to the agent rather than to the environment. Finally, we compare six prompting strategies and three agent architectures under the same semantics, with cost accounting.

In contrast, interactive benchmarks place LLM agents in web, operating-system, software and household environments~\cite{liu2024agentbench,zhou2024webarena,xie2024osworld,jimenez2024swebench,shridhar2021alfworld,mialon2024gaia}. Their realism comes with implicit dynamics: success is checked by scripts or end-state predicates, so failures may stem from perception, interface brittleness or missing knowledge as much as from planning, and success does not certify that intermediate actions were admissible. Attempts to sharpen the evaluation signal leave this gap open. AgentBoard~\cite{ma2024agentboard} adds a progress-rate metric to expose partial progress. APB~\cite{sun2026apb} provides a large diagnostic suite that separates planning failures from execution failures and includes unsolvable cases, but it has no transition model against which each action can be validated. LLM judges, a common fallback, are sensitive to framing and prone to leniency~\cite{zheng2023judging,thakur2025judging}. Our benchmark trades realism for an explicit transition model, which makes every action checkable.

An explicit transition model also lets us examine whether prompting strategies and agent architectures deliver what is claimed for them. Kapoor \etal~\cite{kapoor2025agents} observe that complex agents are often dominated by simple baselines and argue that agent evaluations must therefore report cost alongside accuracy; we follow that recommendation. Studies of individual strategies raise similar doubts. Intrinsic self-correction often fails to improve reasoning and can degrade it~\cite{huang2024selfcorrect}. The performance of ReAct owes more to the similarity between exemplars and queries than to interleaved reasoning~\cite{verma2024brittle}. LLM-in-the-loop search such as Tree of Thoughts gives up soundness and completeness at a high cost in model calls~\cite{katz2024thought}. These studies, however, typically examine a single strategy in a single regime. We compare prompting strategies and agent architectures side by side, on the same tasks and with the same measurements, which separates their effects on executability, goal achievement and abstention.

The agents literature offers precise notions of agency~\cite{wooldridge1995intelligent,franklin1997agent}, architectures for practical reasoning~\cite{rao1995bdi}, and work on integrating planning into agent reasoning~\cite{meneguzzi2015planning}. It also studies how planning, execution and monitoring are interleaved~\cite{brenner2009continual,ghallab2016acting} and how often an agent should reconsider its commitments~\cite{kinny1991commitment}. Recent work connects these concepts to LLMs, for instance by using language models for plan selection in BDI agents~\cite{ichida2024natbdi}; Dignum and Dignum argue that agentic AI should build on the conceptual tools of the AAMAS community~\cite{dignum2025agentifying}, and Wooldridge revisits multi-agent systems research in light of LLMs~\cite{wooldridge2026rethinking}. We use two of these concepts for evaluation. The agent function over interaction histories separates offline planning from planning-by-acting, and the distinction between epistemic (sensing) and ontic actions~\cite{scherl2003knowledge,bolander2011epistemic} allows us to measure whether agents sense before they act.

Dropping an intention that has become unachievable~\cite{cohen1990intention} has a counterpart in planning, where recognising infeasibility is a classical problem with formal certificates~\cite{eriksson2017unsolvability}. LLMs handle this poorly: abstention on unanswerable questions remains weak and has been reported to degrade with reasoning fine-tuning~\cite{kirichenko2025abstentionbench}, and in planning, o1-preview recognised only 27\% of unsolvable Blocksworld instances~\cite{valmeekam2024lrm}. Plancraft~\cite{dagan2025plancraft} adds intentionally unsolvable tasks to an interactive crafting benchmark for LLM agents. We test abstention in every domain and in both regimes.
\section{Evaluation Framework}
\label{sec:framework}

We treat planning as a set of behavioural properties defined by an explicit transition model. Systems see tasks only in natural language, whereas correctness is decided by hidden formal semantics.

\subsection{Tasks behind natural-language briefs}
\label{sec:tasks}

We use the deterministic model of classical planning~\cite{ghallab2004automated}.

\begin{definition}[Planning task]
\label{def:task}
A \emph{planning task} is a tuple $\Pi=\langle S,A,\gamma,s_0,G\rangle$ of states $S$, grounded actions $A$, a partial transition function $\gamma: S\times A \rightharpoonup S$, an initial state $s_0\in S$ and goal states $G\subseteq S$ with $s_0\notin G$; an action $a$ is \emph{applicable} in $s$ if $\gamma(s,a)$ is defined. A plan $\pi=\langle a_1,\dots,a_T\rangle\in A^{*}$ of length $|\pi|=T$ is \emph{executable} if each $a_t$ is applicable in $s_{t-1}$, where $s_t=\gamma(s_{t-1},a_t)$, and \emph{valid} if, in addition, $s_T\in G$. $\Pi$ is \emph{solvable} if it admits a valid plan, and we then write $\Lstar(\Pi)$ for the length of a shortest valid plan.
\end{definition}

All tasks are specified in PDDL~\cite{mcdermott1998pddl}, with numeric fluents~\cite{fox2003pddl21} for the capacities in \dom{LoadBalancing}. All actions have unit cost, so plan quality is plan length. Systems never see the PDDL. Each task is presented through two natural-language briefs, $B(\Pi)$. The \emph{domain brief} describes in prose the object types, the world rules and, for every action, its parameters, requirements and effects. The \emph{instance brief} lists the objects and describes the initial situation, the goal and the optimisation metric. For example, one requirement of \texttt{assign-bucket} in the \dom{LoadBalancing} brief reads ``The server must have available space (its current load must be strictly less than its capacity).'' Briefs keep the exact object and action names, so that plans can be parsed, but never expose predicates or PDDL syntax. They were generated from the PDDL files with GPT-5 (\texttt{gpt-5-2025-08-07}, minimal reasoning effort) in two stages: first the domain brief, then each instance brief conditioned on it. The briefs were checked manually against the source files, and the generation prompts are provided in the supplementary material.

\subsection{Offline planning and planning-by-acting}
\label{sec:regimes}

An agent is commonly modelled as a function from its \emph{interaction history}, the calls it has made to its environment and the observations it has received, to its next action~\cite{russell2021aima,wooldridge2009introduction}. Internal model calls, such as the intermediate steps of a prompting strategy, do not extend this history; only calls to the environment do. This yields the classical distinction between offline problem solving, which computes a complete solution before acting, and online problem solving, which interleaves acting and observing~\cite[Sec.~4.5]{russell2021aima} (Figure~\ref{fig:protocols}).

\begin{definition}[Offline planner]
\label{def:offline}
An \emph{offline planner} maps the briefs $B(\Pi)$, possibly at random, to an output that is parsed into $y\in A^{*}\cup\{\bot\}$, where $\bot$ denotes the flag \UNSAT, the claim that $\Pi$ is unsolvable; an output that cannot be parsed in this way is \emph{malformed}. The planner may perform any number of internal model calls, but its only input from the environment is $B(\Pi)$.
\end{definition}

\begin{definition}[Online agent]
\label{def:online}
The environment of an \emph{online agent} is a simulator of $\Pi$ that maintains a current state $s$, initially $s_0$, and accepts three kinds of calls. An \emph{epistemic} call returns an observation and leaves $s$ unchanged. An \emph{ontic} call $\tool{act}(a)$, with $a\in A$, ends the episode with a \emph{precondition violation} if $a$ is not applicable in $s$; otherwise, it sets $s\leftarrow\gamma(s,a)$ and returns only an acknowledgement. The \emph{terminal} call $\tool{submit}()$ ends the episode with the claim that $s\in G$, and $\tool{submit}(\bot)$ with the claim that $\Pi$ is unsolvable. The agent is a possibly randomised policy that maps each interaction history $h_t=\langle o_0,x_1,o_1,\dots,x_t,o_t\rangle$ of calls $x_i$ and observations $o_i$ (where $o_0$ is the instruction) to the next call. An ill-formed call (an unknown tool, action or object, a wrongly typed argument, or a wrong number of arguments) returns an error observation and leaves $s$ unchanged. Every call, however many tokens it uses, consumes one unit of a budget $K(\Pi)$, which is not disclosed to the agent; exhausting it without a terminal call ends the episode with a \emph{timeout}.
\end{definition}
In our environment, the epistemic calls return a task overview (scenario, a summary of the initial situation and the goal), object listings and descriptions, the action catalogue with natural-language requirements and effects, a rendering of the current state, and the history of executed actions (Appendix~C of the supplementary material shows them for an example task, with a complete episode). Offline, a step that names a non-existent or wrongly typed object makes the plan malformed. The budget is $K(\Pi)=\max(20,3\Lstar(\Pi))$ calls for solvable tasks and 40 for unsolvable ones. This budget allows an agent to retrieve the task with four calls, query the state before every step of an optimal plan, and submit ($2\Lstar(\Pi)+5$ calls in total), with at least two calls to spare. Treating an inapplicable action as fatal encodes a safety requirement: in a deployed system, such an action would be unsafe, illegal or physically impossible. An ill-formed call, by contrast, is a failure of communication that the environment can reject harmlessly.

\subsection{Planning properties}
\label{sec:properties}

Table~\ref{tab:claims} maps each attributed competence to the property that operationalises it.

\begin{definition}[Planning properties]
\label{def:properties}
Let $\pi$ be the plan that an episode on $\Pi$ commits to: the steps of $y$ offline (the empty plan if $y=\bot$), and the well-formed $\tool{act}$ calls online. The episode is \emph{executable} if its output is not malformed and $\pi$ is executable. It \emph{succeeds} if $\Pi$ is solvable and $\pi$ is valid (online, the episode must also end with $\tool{submit}()$), or if $\Pi$ is unsolvable and the episode claims so ($y=\bot$ offline, $\tool{submit}(\bot)$ online). A correct claim of unsolvability is a \emph{detection}; an incorrect one is a \emph{false abstention}. The \emph{score} is $\Lstar(\Pi)/|\pi|$, the ratio of the optimal to the committed plan length, for a success on a solvable task, $1$ for a detection and $0$ otherwise. For online episodes, the \emph{inspection rate} is the fraction of $\tool{act}$ calls preceded by a query of the current state since the previous $\tool{act}$ call.
\end{definition}

\begin{table}[t]
\centering
\small
\caption{Competences attributed to prompting strategies and agent architectures (with the methods credited with them), the property that operationalises each, and its measure.}
\label{tab:claims}
\begin{tabular}{@{}>{\raggedright\arraybackslash}p{0.33\columnwidth}>{\raggedright\arraybackslash}p{0.34\columnwidth}>{\raggedright\arraybackslash}p{0.25\columnwidth}@{}}
\toprule
Attributed competence & Checked property & Measure \\
\midrule
Means--ends reasoning (CoT, Plan-and-Solve, ReAct) & The plan reaches the goal & Success (solvable) \\
Tracking state and constraints (Step-Back, ReAct) & Every committed action is applicable & Executability \\
Search and lookahead (Tree of Thoughts) & Higher success, no wasted actions & Success; score $\Lstar/|\pi|$ \\
Self-monitoring and repair (Self-Refine, Reflexion, Critic) & Fewer failures than the within-model baseline & Paired $\Delta$ vs.\ baseline \\
Sensing before acting (ReAct, Critic) & The state is queried before each commitment & Inspection rate; failure mode \\
Recognising infeasibility (all) & $\bot$ if and only if no plan exists & Detection; false abstention \\
Efficient reasoning (all) & Gains justify the overhead & Tokens, cost, Pareto front \\
\bottomrule
\end{tabular}
\end{table}

All measures are computed from validated plans and trajectories, never from model-generated rationales. Every failed episode on a solvable task is assigned exactly one terminal category: \emph{malformed output} (offline), \emph{precondition violation}, \emph{goal not reached} (all actions applicable, final state not in $G$), \emph{timeout} (online), or \emph{false abstention}. On an unsolvable task, every failure is a \emph{missed infeasibility}, since only $\bot$ can succeed, but we still record how the episode ended.

\subsection{Relating the two regimes}
\label{sec:dominance}

The two regimes are not symmetric. The following elementary observation makes comparisons between them interpretable.

\begin{proposition}[Acting weakly dominates offline planning]
\label{prop:dominance}
Suppose that the observations of a fixed sequence of $c$ epistemic calls, issued at the start of an episode, determine $B(\Pi)$. For every offline planner $P$, there is an online agent $\mathcal{A}_P$ such that, whenever a run of $P$ on a task $\Pi$ outputs $y$ and commits to the plan $\pi$, with $c+|\pi|+1\le K(\Pi)$, the episode of $\mathcal{A}_P$ that simulates this run achieves at least the success and score of that run.
\end{proposition}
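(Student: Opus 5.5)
The proof is a direct simulation argument. I would define $\mathcal{A}_P$ in three phases. \emph{Retrieval:} issue the fixed sequence of $c$ epistemic calls and, by hypothesis, reconstruct $B(\Pi)$ from their observations $o_1,\dots,o_c$ via a fixed decoding map. \emph{Deliberation:} run $P$ internally on $B(\Pi)$, drawing any randomness of $P$ from the agent's own randomness so that each run of $P$ corresponds to one episode of $\mathcal{A}_P$. By Definition~\ref{def:online}, internal model calls do not extend the interaction history and consume no budget. \emph{Execution:} if $y=\bot$, call $\tool{submit}(\bot)$. If $y$ is malformed, do anything, for instance call $\tool{submit}()$. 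Otherwise, for $y=\langle a_1,\dots,a_T\rangle$, call $\tool{act}(a_1),\dots,\tool{act}(a_T)$ in order and then $\tool{submit}()$. This is a well-defined possibly randomised policy on histories, because every call depends only on $o_1,\dots,o_c$ and the internal coins.

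Next I would fix a run of $P$ with output $y$ and committed plan $\pi$ with $c+|\pi|+1\le K(\Pi)$, and show that the simulating episode achieves at least the same success and score. The budget condition ensures that the $c$ retrieval calls, the $|\pi|$ ontic calls and the final submit all fit, so no timeout occurs. The argument then splits into cases. If the run succeeds with $y=\bot$ on an unsolvable task, the episode issues $\tool{submit}(\bot)$, which is a detection with score $1$, and here $|\pi|=0$. If the run succeeds on a solvable task, then $\pi$ is valid. By induction on $t$, each $a_t$ is applicable in the simulator state $s_{t-1}$, which equals the state in Definition~\ref{def:task} because the simulator applies $\gamma$. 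No $\tool{act}$ call is ill-formed, since a well-formed offline plan names only actions in $A$, so every call changes the state and no precondition violation occurs. After $T$ steps $s_T\in G$, and the episode ends with $\tool{submit}()$. The online committed plan is exactly $\pi$, so the score $\Lstar(\Pi)/|\pi|$ is identical. If the run fails, its success and score are $0$, and the inequality holds trivially.

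The main obstacle is definitional rather than technical. It lies in making the correspondence between runs and episodes precise under randomisation, and in checking that the online committed plan coincides with $\pi$. For the first point, I would couple the two through shared random coins. For the second, I would check that offline well-formedness implies online well-formedness, so that no $\tool{act}$ call is silently dropped as ill-formed and the committed sequences match. A secondary subtlety is the malformed case, where the offline run has no well-defined $\pi$ and counts as a failure, so it imposes no constraint. I would state this explicitly rather than rely on the budget hypothesis.
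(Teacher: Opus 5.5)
Your proposal is correct and follows the paper's proof essentially step for step. You issue the $c$ epistemic calls to recover $B(\Pi)$, run $P$ internally with the same random choices, and then either submit $\bot$ or execute the plan and submit, using the same case split into failure, detection and valid plan (the validity case resting on the simulator applying $\gamma$ from $s_0$). Your explicit handling of malformed outputs, and your check that offline well-formedness implies online well-formedness so that the committed plans coincide, fill in details that the paper's proof leaves implicit.
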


\begin{proof}
$\mathcal{A}_P$ first issues the $c$ epistemic calls, reconstructs $B(\Pi)$ from their observations and runs $P$ on it internally, with the same random choices, to obtain $y$. If $y=\bot$, it calls $\tool{submit}(\bot)$; otherwise, it calls $\tool{act}(a_t)$ for each step $a_t$ of $y$ in turn and then $\tool{submit}()$. This uses at most $c+|\pi|+1\le K(\Pi)$ calls. If $y$ fails, its success and score are $0$. If $y=\bot$ succeeds, $\Pi$ is unsolvable and $\tool{submit}(\bot)$ is a detection. If a plan $y$ succeeds, it is valid. Since $\gamma$ is deterministic, and the simulator starts in $s_0$ and applies the same $\gamma$ against which $y$ is validated offline, every $\tool{act}$ call is applicable and the episode reaches the final state of $y$, which lies in $G$, before $\tool{submit}()$. The episode commits to the same plan $\pi$ and therefore obtains the same score.
\end{proof}

In our benchmark the premise holds with $c=4$: the overview, the object listing, the action catalogue and the current state give the objects, initial situation, goal and action rules, and so determine $\Pi$ and hence its briefs $B(\Pi)$, which are fixed for each task. Every successful offline plan we observed also satisfies the length condition. If an agent performs worse than the offline planner built on the same model, the shortfall is therefore not imposed by the environment. It lies in how the agent architecture presents the task to the model, in what the agent chooses to sense, and in when it commits to an action.

\section{Benchmark}
\label{sec:benchmark}

The suite has five domains with ten tasks each: three standard planning domains and two new ones. \dom{Blocksworld}, \dom{Gripper} and \dom{Logistics} come from the planning literature~\cite{ghallab2004automated,mcdermott2000aips98}, and each targets a different difficulty: \dom{Blocksworld} stresses subgoal interaction under strict preconditions, \dom{Gripper} bounded resources and batching with two single-capacity grippers, and \dom{Logistics} long causal chains that alternate between trucks and an aeroplane. Because these domains are widely published, we replace object identifiers with random strings (\eg \texttt{airplane\_1} becomes \texttt{air\_4f77ec}) to reduce the chance that plans are recalled rather than constructed. Action names remain meaningful, since obfuscating them~\cite{valmeekam2023planbench} would test symbol manipulation rather than the natural-language planning to which the claims refer.

The two new domains, \dom{LoadBalancing} and \dom{Hospital}, are unlikely to have been memorised, and they model applied settings in which an invalid action has real consequences. \dom{LoadBalancing} assigns traffic buckets to servers subject to region, service availability and numeric capacity. Its actions power servers and start or stop services, and its goals can require reconfiguration, such as decommissioning a server. \dom{Hospital} manages patient placement under ward suitability and isolation requirements. Beds can be dirty or blocked, and transfers and discharges leave beds dirty, so plans must account for side effects.

In the classification of Russell and Norvig~\cite{russell2021aima}, every task environment is fully observable, deterministic, static, discrete, known and single-agent: the most benign class except that tasks are sequential. Failures therefore cannot be attributed to uncertainty or to other agents. An ideal agent could track the state from the task description alone; state queries allow an agent to correct its own state tracking.

Each domain has eight solvable and two unsolvable tasks (Table~\ref{tab:instances}). Optimal plan lengths, computed by breadth-first search over the reachable state space, range from 2 to 63 actions, but length is a poor proxy for difficulty: long \dom{Gripper} plans repeat a simple pattern, while short \dom{Blocksworld} plans require reasoning about interfering subgoals. Unsolvable tasks are small modifications of solvable ones, for example removing the only truck in the destination city (\dom{Logistics}), lowering a server's capacity below the demand it must absorb (\dom{LoadBalancing}), requiring a cyclic tower (\dom{Blocksworld}), or starting with neither gripper free nor holding a ball, so that no ball can be picked up (\dom{Gripper}).

\begin{table}[t]
\centering
\small
\setlength{\tabcolsep}{3.2pt}
\renewcommand{\arraystretch}{1.2}
\caption{Optimal plan length $\Lstar$ per task; a dash marks an unsolvable task.}
\label{tab:instances}
\begin{tabular}{@{}lcccccccccc@{}}
\toprule
Domain & 1 & 2 & 3 & 4 & 5 & 6 & 7 & 8 & 9 & 10 \\
\midrule
\dom{Blocksworld}   & 4 & 6 & 6 & 10 & -- & 8 & -- & 10 & 10 & 14 \\
\dom{Gripper}       & -- & 5 & 9 & 11 & 15 & 17 & 21 & 23 & -- & 29 \\
\dom{Logistics}     & 9 & 15 & 21 & -- & -- & 39 & 45 & 51 & 57 & 63 \\
\dom{LoadBalancing} & 4 & 3 & -- & 7 & 4 & 5 & 3 & -- & 8 & 5 \\
\dom{Hospital}      & 2 & 2 & -- & 6 & 11 & -- & 9 & 8 & 5 & 10 \\
\bottomrule
\end{tabular}
\end{table}

Every offline plan is validated with VAL~\cite{howey2004val}, which also reports the first failing step and its unsatisfied preconditions. The agent environment is built on the same validator: each proposed action is checked by re-validating the executed prefix extended with the candidate, and the state returned to the agent is reconstructed from the validator's trace. Offline planners and online agents are therefore judged by identical semantics.

Before validation, model outputs must be turned into actions. An extractor parses PDDL-style action lines, with a fallback that compiles short imperative steps using the action catalogue. Agent controllers are implemented in DSPy~\cite{khattab2024dspy}, and every run logs its prompts, raw outputs, parsed plan, validator transcript, full tool-call trajectory, token counts and cost. The benchmark itself is stored as data (PDDL files plus one JSON file per domain with the briefs and tool-facing schemas), so domains and problems can be added without code changes. Code and data are provided in the supplementary material, whose Appendices~A and~B list the PDDL domains and two example tasks.
\section{Experimental Setup}
\label{sec:setup}

Our experiments address four research questions. \textbf{RQ1} (offline planning): do prompting strategies improve the executability, goal achievement and efficiency of offline plans? \textbf{RQ2} (planning-by-acting): do agents match or exceed the success rate of offline planning with the same model, which Proposition~\ref{prop:dominance} shows to be possible, and does reflective memory or a critic improve on ReAct? \textbf{RQ3} (abstention): do systems recognise unsolvable tasks without abstaining on solvable ones? \textbf{RQ4} (cost and scale): are strategies and architectures worth their overhead, and do they narrow the gap between lower- and higher-tier models?

We evaluate four models from two providers, one lower-tier and one higher-tier model from each (Table~\ref{tab:models}), which allows comparisons across tiers within each family, although the providers do not disclose parameter counts. We use non-reasoning models by design: the strategies and architectures under test were proposed for such models, and models trained for long reasoning internalise part of what these methods externalise. All calls use temperature~0 except the proposal step of Tree of Thoughts, and offline calls are capped at 4{,}096 output tokens.

\begin{table}[t]
\centering
\small
\caption{Evaluated models and list prices (USD per million tokens, as of January 2026) used for cost estimates.}
\label{tab:models}
\begin{tabular}{@{}llrr@{}}
\toprule
Model & API identifier & Input & Output \\
\midrule
GPT-4o-mini & \texttt{gpt-4o-mini-2024-07-18} & 0.15 & 0.60 \\
GPT-4o & \texttt{gpt-4o-2024-08-06} & 2.50 & 10.00 \\
Gemini 2.0 Flash-Lite & \texttt{gemini-2.0-flash-lite} & 0.075 & 0.30 \\
Gemini 2.0 Flash & \texttt{gemini-2.0-flash} & 0.10 & 0.40 \\
\bottomrule
\end{tabular}
\end{table}

\begin{table*}[t]
\centering
\small
\caption{Results per model and method. \emph{Succ.}: success on the 40 solvable tasks (\%). \emph{Exec.}: share of those episodes that are executable (well formed, with every action applicable), whether or not they reach the goal (Definition~\ref{def:properties}; \%). \emph{Det.}: unsolvable tasks recognised (out of 10). \emph{Tok.}: mean tokens per episode in thousands, averaged over the four models (per-model values are in the supplementary material). Bold marks the best success per model in each regime. No difference from the within-model baseline (Direct offline, ReAct online) survives the Holm correction; $^{\ast}$ marks $p<0.05$ before correction (exact McNemar test).}
\label{tab:results}
\setlength{\tabcolsep}{4.2pt}
\begin{tabular}{@{}lr ccc ccc ccc ccc@{}}
\toprule
& & \multicolumn{3}{c}{GPT-4o-mini} & \multicolumn{3}{c}{GPT-4o} & \multicolumn{3}{c}{Gemini 2.0 Flash-Lite} & \multicolumn{3}{c}{Gemini 2.0 Flash} \\
\cmidrule(lr){3-5}\cmidrule(lr){6-8}\cmidrule(lr){9-11}\cmidrule(l){12-14}
Method & Tok. & Succ. & Exec. & Det. & Succ. & Exec. & Det. & Succ. & Exec. & Det. & Succ. & Exec. & Det. \\
\midrule
\multicolumn{14}{@{}l}{\emph{Offline (prompting strategies)}}\\
Direct & 1.5 & 12.5 & 12.5 & 1 & 57.5 & 62.5 & 1 & \textbf{25.0} & 45.0 & 0 & 45.0 & 47.5 & 0 \\
Zero-shot CoT & 4.3 & \textbf{27.5}$^{\ast}$ & 32.5 & 1 & \textbf{77.5}$^{\ast}$ & 80.0 & 3 & 22.5 & 27.5 & 0 & 62.5 & 65.0 & 2 \\
Plan-and-Solve & 2.9 & 17.5 & 22.5 & 0 & 75.0 & 77.5 & 1 & 22.5 & 25.0 & 0 & \textbf{65.0}$^{\ast}$ & 65.0 & 0 \\
Step-Back & 3.0 & 20.0 & 25.0 & 1 & 57.5 & 57.5 & 1 & 20.0 & 30.0 & 0 & 57.5 & 60.0 & 3 \\
Self-Refine & 7.1 & 7.5 & 10.0 & 1 & 57.5 & 60.0 & 1 & 22.5 & 42.5 & 0 & 50.0 & 60.0 & 4 \\
Tree of Thoughts & 32.3 & 17.5 & 20.0 & 0 & 42.5 & 52.5 & 1 & 15.0 & 17.5 & 0 & 35.0 & 47.5 & 1 \\
\midrule
\multicolumn{14}{@{}l}{\emph{Online (agent architectures)}}\\
ReAct & 25.6 & \textbf{7.5} & 20.0 & 0 & \textbf{40.0} & 52.5 & 4 & \textbf{12.5} & 17.5 & 0 & \textbf{32.5} & 42.5 & 1 \\
Reflexion & 28.7 & \textbf{7.5} & 12.5 & 0 & 35.0 & 55.0 & 2 & \textbf{12.5} & 12.5 & 1 & 27.5 & 50.0 & 2 \\
Critic & 45.1 & 0.0 & 30.0 & 0 & 35.0 & 40.0 & 0 & 10.0 & 20.0 & 0 & \textbf{32.5} & 47.5 & 0 \\
\bottomrule
\end{tabular}
\end{table*}

Offline, each model is prompted with six strategies. All receive the same briefs, all prompts are zero-shot, and the final call of every strategy must follow the same output contract: a single fenced block with one grounded action per line, or the flag \UNSAT. If several blocks are produced, we keep the last; intermediate calls are unconstrained. \emph{Direct} makes a single call. \emph{Zero-shot CoT}~\cite{kojima2022zeroshot} first elicits a free-form reasoning trace (``Let's think step by step'') and then asks for the plan conditioned on it. \emph{Plan-and-Solve}~\cite{wang2023planandsolve} first asks for a sketch of 4--10 subgoals without grounded actions, and \emph{Step-Back}~\cite{zheng2024stepback} first abstracts the problem into 6--12 constraints and invariants.

\emph{Self-Refine}~\cite{madaan2023selfrefine} alternates drafting and critique by the same model for up to three rounds. Since LLM self-critique of plan semantics is unreliable~\cite{stechly2025selfverification}, the critique may report only mechanically checkable issues: unknown names, wrong arity or argument order, and formatting errors. The loop stops early when the critique finds no such issue or the plan stops changing. \emph{Tree of Thoughts} (ToT)~\cite{yao2023tot} runs a beam search over plan prefixes. At each of six depths, it samples three continuations of one to three actions for each prefix (temperature 0.7), scores them in a separate call on a 0--10 scale, and keeps the two best prefixes. The best prefix is finally completed into a full plan.

Online, the same models act as agents under three architectures, which give the acting model the same zero-shot instruction: it warns that an inapplicable action ends the episode, advises querying the state when unsure and explains how to declare a task unsolvable. All prompts are given in Appendix~D of the supplementary material. \emph{ReAct}~\cite{yao2023react} is a single policy that emits a reasoning step and one tool call per turn, conditioned on the full interaction history.

The other two architectures extend ReAct with the components named in RQ2: reflective memory and a critic. \emph{Reflexion}~\cite{shinn2023reflexion} is adapted to a single episode, because a precondition violation ends the episode and a deployed agent cannot retry after an unsafe act. An actor behaves as in ReAct; after every ill-formed call, a reflector writes a short lesson (\eg the correct signature of an action) into a memory that the actor sees for the rest of the episode. \emph{Critic-guided ReAct} (Critic) adapts CRITIC~\cite{gou2024critic}. Every proposed \tool{act} or \tool{submit} call is first reviewed by a critic, a separate call to the same model that sees the same interaction history. A rejection is returned to the actor, together with the critic's reason, in place of an observation. Unlike in CRITIC, the critic calls no tools, so it judges each proposal from the history alone. Every proposal, approved or rejected, counts against the budget, which bounds the exchange between actor and critic.

In total, every model is run once with every strategy and architecture on all 50 tasks, giving $4\times6\times50=1{,}200$ offline and $4\times3\times50=600$ online episodes. Closed APIs can retain some non-determinism even at temperature~0, but single runs at temperature~0 keep the comparison as reproducible as these APIs allow; our intervals and tests therefore capture variation over tasks, not over repeated runs. We report success on the 40 solvable tasks, detection on the 10 unsolvable ones, and the score over all 50, together with costs computed from logged token counts at the list prices in Table~\ref{tab:models}; for the best configurations, we give 95\% bootstrap confidence intervals over tasks. Because all conditions share the same tasks, we compare each strategy and architecture with its within-model baseline (Direct offline, ReAct online) using exact McNemar tests on paired outcomes, and we control the family-wise error rate with the Holm--Bonferroni procedure within each regime (20 comparisons offline, 8 online). With 40 paired tasks, a difference must reach at least 25 points offline (22.5 online) to survive the correction.
\section{Results}
\label{sec:results}

\begin{figure*}[t]
\centering
\begin{minipage}[b]{0.485\textwidth}
\centering
\includegraphics[width=\linewidth]{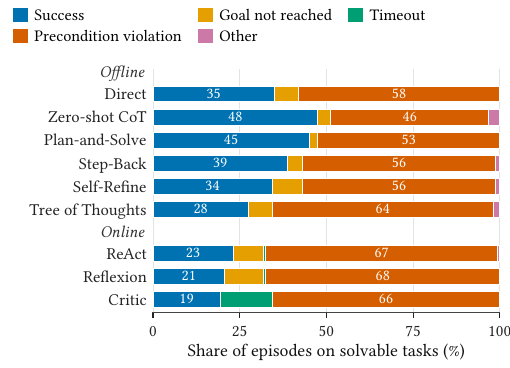}\\[2pt]
{\small (a) How episodes on solvable tasks end}
\end{minipage}\hfill
\begin{minipage}[b]{0.485\textwidth}
\centering
\includegraphics[width=\linewidth]{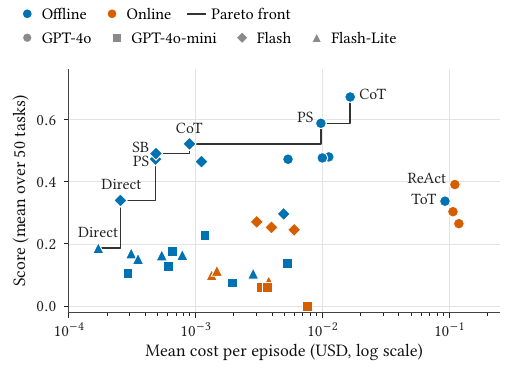}\\[2pt]
{\small (b) Score against cost}
\end{minipage}
\caption{Outcomes and cost. (a) Episodes on solvable tasks, pooled over the four models (160 per method); labels give the percentage of successes and of precondition violations, and \emph{Other} covers malformed outputs and false abstentions. (b) Score against mean cost per episode for the 36 configurations; colour marks the regime, marker shape the model, and the line the Pareto front. PS: Plan-and-Solve; SB: Step-Back; CoT: chain-of-thought; ToT: Tree of Thoughts.}
\label{fig:results}
\Description{Two panels. Left: horizontal stacked bars, one per method. Offline, Direct, zero-shot CoT, Plan-and-Solve, Step-Back, Self-Refine and Tree of Thoughts succeed in 35, 48, 45, 39, 34 and 28 percent of episodes, and precondition violations end 58, 46, 53, 56, 56 and 64 percent. Online, ReAct, Reflexion and Critic succeed in 23, 21 and 19 percent, and precondition violations end 67, 68 and 66 percent; Critic also has a 15 percent share of timeouts. Right: scatter plot of score against cost on a logarithmic axis. Offline configurations, shown in blue, form the upper-left boundary; online configurations, shown in red, all lie below it. GPT-4o with chain-of-thought has the highest score, 0.67, at about 0.017 dollars per episode. GPT-4o with Tree of Thoughts and GPT-4o with ReAct cost about 0.1 dollars per episode and score 0.34 and 0.39.}
\end{figure*}

Table~\ref{tab:results} gives the results of every configuration, and Figure~\ref{fig:results} shows how episodes end and what they cost.

\subsection{Offline planning (RQ1)}
\label{sec:res-offline}

The best offline configuration, GPT-4o with zero-shot chain-of-thought, solves 77.5\% of the solvable tasks (31 of 40; 95\% bootstrap interval 65--90\%). Gains from prompting are model-specific. Chain-of-thought and Plan-and-Solve raise the success rate of GPT-4o from 57.5\% to 77.5\% and 75.0\%, and that of Gemini~2.0 Flash from 45.0\% to 62.5\% and 65.0\%. Chain-of-thought also raises GPT-4o-mini's success rate from 12.5\% to 27.5\%. In contrast, no strategy improves on direct prompting for Flash-Lite. Self-Refine never exceeds direct prompting by more than five points, and Tree of Thoughts lowers success for three of the four models.

None of these differences survives the Holm correction. Three are significant before correction: chain-of-thought for GPT-4o (9 tasks gained, 1 lost; $p=0.02$) and for GPT-4o-mini (6 gained, none lost; $p=0.03$), and Plan-and-Solve for Flash (10 gained, 2 lost; $p=0.04$). With 40 tasks, only large effects are detectable (Section~\ref{sec:setup}), so these gains may still be genuine.

Precondition violations are the dominant failure mode. Under every strategy, 46--64\% of the plans for solvable tasks contain an inapplicable action (Figure~\ref{fig:results}a), and such plans account for 89\% of the failed ones. Executable plans that miss the goal are less frequent (2.5--9\% per strategy), and malformed outputs are rarer still (1.3\% overall). Consequently, executability is close to success for the strongest configurations (80.0\% against 77.5\% for GPT-4o with chain-of-thought). The gap is larger for some weaker configurations: the direct plans of Flash-Lite are executable in 45.0\% of the solvable episodes but reach the goal in only 25.0\%. Successful plans are usually optimal: 83\% of them have optimal length, and the mean score of a success is 0.95.

\subsection{Online planning-by-acting (RQ2)}
\label{sec:res-online}

Every model performs worse online than offline. The best online configuration, GPT-4o with ReAct, solves 40.0\% of the solvable tasks (16 of 40; 95\% interval 25--55\%), against 77.5\% offline. For GPT-4o, Flash and Flash-Lite, the best online result is about half of the best offline one; for GPT-4o-mini, it is about a quarter (7.5\% against 27.5\%). The drop already appears between the baselines: ReAct falls below direct prompting for all four models. Pooled over models and methods, the gap appears in every domain (9 to 23 points), and the new domains are not systematically harder than the classical ones (Appendix~F of the supplementary material). By Proposition~\ref{prop:dominance}, the gap is not caused by the environment, because every successful offline plan fits within the budget.

Agents rarely use the online regime's queries before acting. Only 7.3\% of their actions are preceded by a query of the current state (the inspection rate; ReAct 6.4\%, Reflexion 8.7\%, Critic 6.7\%), and in 58\% of the episodes in which an agent acts at all, it never queries the state. An average episode contains 5.8 actions but only 0.5 state queries, even though the instruction advises the agent to query the state when unsure. Precondition violations terminate 66--68\% of the solvable episodes under every architecture and account for 84\% of the failed ones (77--94\% per model).

Neither reflective memory nor a critic improves performance. Reflexion and Critic never outperform ReAct for any model, and no difference survives the correction (smallest uncorrected $p=0.25$). The architectures alter how agents fail rather than how often. The critic leaves precondition violations almost unchanged (66\% of solvable episodes, against 67\% for ReAct) but adds timeouts: 15.0\% of its solvable episodes exhaust the budget, against 0.6\% for ReAct, because every rejected proposal consumes a call. Reflexion improves executability over ReAct for the two higher-tier models, but more of its episodes end without reaching the goal.

\subsection{Abstention on unsolvable tasks (RQ3)}
\label{sec:res-infeasible}

Systems rarely recognise that a task is unsolvable. Only 32 of the 360 episodes on unsolvable tasks end with a detection (8.9\%): 22 of 240 offline and 10 of 120 online. Online, about nine in ten missed infeasibilities end with a precondition violation. The best configurations recognise 4 of the 10 tasks (Flash with Self-Refine offline, GPT-4o with ReAct online). Critic recognises none. Detections are also concentrated on a few tasks: 13 of the 22 offline detections occur on a single \dom{Gripper} task in which no gripper can ever pick up a ball. False abstentions, by contrast, are almost absent: only 1 of the 1,440 episodes on solvable tasks claims that no plan exists. The deficit therefore lies in recall rather than precision. Systems tend to attempt a plan rather than abstain, and no strategy or architecture reverses this tendency.

\subsection{Cost and scale (RQ4)}
\label{sec:res-cost}

The most expensive strategies and architectures do not produce gains that justify their cost. Relative to direct prompting, Plan-and-Solve and Step-Back use about twice as many tokens, chain-of-thought about three times, Self-Refine about five times and Tree of Thoughts about 22 times as many. Online, Reflexion uses 1.1 times and Critic 1.8 times as many tokens as ReAct, and an online episode costs \$0.031 on average, against \$0.007 offline. Every configuration on the Pareto front of score against cost is offline (Figure~\ref{fig:results}b), and Tree of Thoughts, Self-Refine and all online configurations are dominated.

Chain-of-thought and Plan-and-Solve widen the gap between model tiers rather than narrowing it. GPT-4o leads GPT-4o-mini by 45 points with direct prompting and by 50 points with chain-of-thought, and Flash leads Flash-Lite by 20 points with direct prompting and by 40 points with chain-of-thought. Averaged over strategies, offline success ranges from 17\% to 61\% across models; averaged over models, it ranges only from 28\% to 48\% across strategies. Online, the corresponding ranges are 5--37\% across models and 19--23\% across architectures. Success therefore varies more across models than across prompting strategies and architectures.

\section{Discussion and Limitations}
\label{sec:discussion}

The results support few of the competences attributed to the strategies and architectures in Table~\ref{tab:claims}. Chain-of-thought and Plan-and-Solve improve means--ends reasoning and constraint tracking for three of the four models, but by margins that are too small to confirm on 40 tasks. Search-based prompting does not improve planning: Tree of Thoughts reproduces the form of search without the systematic exploration of the state space on which the reliability of a planner rests. Self-monitoring architectures do not reduce failures, none induces sensing before acting, and simple strategies dominate expensive ones in cost-effectiveness.

Acting should have worked in the agents' favour: by Proposition~\ref{prop:dominance}, an agent in our benchmark can always plan first and then execute. The agents we evaluated nevertheless perform worse online, and most of their failures result from inapplicable actions that a state query would have exposed. Rare sensing explains why agents do not benefit from online queries, but not why they perform worse than offline planners, which do not sense at all. A more likely cause is overconfident commitment: agents commit to one action at a time before they have a complete plan, based on beliefs about the state that they rarely check. In this respect, they resemble the bold agents of Kinny and Georgeff~\cite{kinny1991commitment}, which execute their plans without stopping to reconsider.

Our critic does not remedy this overconfidence, since it reads the same history as the actor and cannot query the state itself. These results suggest that the model should not decide whether to monitor execution: a sound validator should intercept inapplicable actions before execution, as LLM-Modulo architectures propose~\cite{kambhampati2024llmmodulo}. In our environment, such a gate would turn every precondition violation into a recoverable error; whether agents would then recover remains an open question.

A rational agent drops an intention once it believes the goal to be unachievable~\cite{cohen1990intention}, but the systems we evaluated almost never do so. GPT-4o and Flash together account for 27 of the 32 detections, yet even these models miss most unsolvable tasks. Planning ability therefore does not guarantee the recognition of infeasibility, which must be tested directly.

Our results suggest four practices for substantiating claims about agentic planning: report executability, failure modes and detection rather than success alone; state whether a system is offline or online in the agent-theoretic sense, since the two regimes test different competences; include tasks that admit no valid plan, without which an inability to abstain remains invisible; and report cost, because complex designs are often dominated by simple ones~\cite{kapoor2025agents}. Where an explicit model of the task exists, planning formalisms make such checks inexpensive.

The benchmark is small (50 tasks), and each configuration was run once, at temperature~0.
Its domains are deterministic, fully observable and single-agent, which excludes partial observability, stochastic effects and interaction with other agents. Three of the five domains are classical, and randomised identifiers reduce, but do not eliminate, the advantage of familiarity. The briefs were generated with an LLM and checked by hand, so a residual translation error cannot be ruled out. Treating precondition violations as fatal is a deliberate safety choice; a regime that merely reports them would favour online agents.  Multi-agent LLM systems add coordination and communication, which our semantics does not cover; multi-agent planning domains are a natural next step.

The results are specific to the evaluated closed models, whose serving behaviour may change over time, and models trained for long reasoning may behave differently. Finally, we re-implemented all strategies and architectures and adapted some of them to our protocol (Section~\ref{sec:setup}): Reflexion reflects within one episode rather than across retries; Self-Refine critiques only mechanically checkable issues, so semantic repair is tested only by Critic; and in Critic, the critic calls no tools. Other prompts or hyperparameters could change individual results.

\section{Conclusion}
\label{sec:conclusion}


We asked whether prompting strategies and agent architectures give LLM-based systems the planning competences claimed for them. For the four models we evaluated, and planning problems with a short length (2 to 30 actions) the evidence is largely negative. LLM-based planning ignores action preconditions and fails to reach the goal in the majority of cases, with performance differences across architectures and prompting strategies being model-specific and statistically fragile: no one strategy solves this problem. In the cases where the agent can choose to inspect the state, the agent foregoes this option to commit to potentially dangerous or impossible actions. Moreover, despite being informed of the possibility of problems being impossible, these tasks are rarely recognised and the LLM wastes resources in trying to solve them. The most expensive strategies increase cost by up to 20 to 30 times without improving average success. Planning performance appears to depend mainly on the underlying model. A strong advice to agent designers is to use validators to block inapplicable actions, as it is the most frequent mode of failure; and that evaluations should check planning properties rather than end-to-end success alone.

\bibliographystyle{ACM-Reference-Format}
\bibliography{refs}

@article{wooldridge1995intelligent,
  title     = {Intelligent Agents: Theory and Practice},
  author    = {Wooldridge, Michael and Jennings, Nicholas R.},
  journal   = {The Knowledge Engineering Review},
  volume    = {10},
  number    = {2},
  pages     = {115--152},
  year      = {1995}
}

@inproceedings{franklin1997agent,
  title     = {Is It an Agent, or Just a Program?: A Taxonomy for Autonomous Agents},
  author    = {Franklin, Stan and Graesser, Art},
  booktitle = {Intelligent Agents III: Agent Theories, Architectures, and Languages (ATAL 1996)},
  series    = {Lecture Notes in Computer Science},
  volume    = {1193},
  publisher = {Springer},
  year      = {1997},
  doi       = {10.1007/BFb0013570}
}

@book{russell2021aima,
  title     = {Artificial Intelligence: A Modern Approach},
  author    = {Russell, Stuart J. and Norvig, Peter},
  edition   = {4th},
  publisher = {Pearson},
  year      = {2021}
}

@inproceedings{rao1995bdi,
  title     = {{BDI} Agents: From Theory to Practice},
  author    = {Rao, Anand S. and Georgeff, Michael P.},
  booktitle = {Proceedings of the First International Conference on Multi-Agent Systems (ICMAS)},
  pages     = {312--319},
  year      = {1995},
  publisher = {AAAI Press},
}

@inproceedings{wooldridge2026rethinking,
  title     = {Rethinking Multi-agent Systems in the Era of {LLMs}},
  author    = {Wooldridge, Michael},
  booktitle = {Proceedings of the 25th International Conference on Autonomous Agents and Multiagent Systems (AAMAS)},
  year      = {2026},
  publisher = {International Foundation for Autonomous Agents and Multiagent Systems},
  address   = {Richland, SC},
}

@misc{dignum2025agentifying,
  title         = {Agentifying Agentic {AI}},
  author        = {Dignum, Virginia and Dignum, Frank},
  year          = {2025},
  eprint        = {2511.17332},
  archivePrefix = {arXiv}
}

@inproceedings{ichida2024natbdi,
  title     = {{BDI} Agents in Natural Language Environments},
  author    = {Ichida, Alexandre Yukio and Meneguzzi, Felipe and Cardoso, Rafael C.},
  booktitle = {Proceedings of the 23rd International Conference on Autonomous Agents and Multiagent Systems (AAMAS)},
  pages     = {880--888},
  year      = {2024},
  publisher = {International Foundation for Autonomous Agents and Multiagent Systems},
  address   = {Richland, SC},
}

@inproceedings{choi2026reactree,
  title     = {{ReAcTree}: Hierarchical {LLM} Agent Trees with Control Flow for Long-Horizon Task Planning},
  author    = {Choi, Jae-Woo and Kim, Hyungmin and Ong, Hyobin and Yoon, Youngwoo and Jang, Minsu and Kim, Dohyung and Kim, Jaehong},
  booktitle = {Proceedings of the 25th International Conference on Autonomous Agents and Multiagent Systems (AAMAS)},
  pages     = {319--328},
  year      = {2026},
  publisher = {International Foundation for Autonomous Agents and Multiagent Systems},
  address   = {Richland, SC},
}

@inproceedings{hsiao2026procedural,
  title     = {Procedural Knowledge Improves Agentic {LLM} Workflows},
  author    = {Hsiao, Vincent and Roberts, Mark and Smith, Leslie},
  booktitle = {Proceedings of the 25th International Conference on Autonomous Agents and Multiagent Systems (AAMAS)},
  pages     = {1425--1433},
  year      = {2026},
  publisher = {International Foundation for Autonomous Agents and Multiagent Systems},
  address   = {Richland, SC},
}

@article{bolander2011epistemic,
  title     = {Epistemic Planning for Single- and Multi-Agent Systems},
  author    = {Bolander, Thomas and Andersen, Mikkel Birkegaard},
  journal   = {Journal of Applied Non-Classical Logics},
  volume    = {21},
  number    = {1},
  pages     = {9--34},
  year      = {2011},
  doi       = {10.3166/jancl.21.9-34}
}

@book{ghallab2016acting,
  title     = {Automated Planning and Acting},
  author    = {Ghallab, Malik and Nau, Dana and Traverso, Paolo},
  publisher = {Cambridge University Press},
  year      = {2016}
}

@book{ghallab2004automated,
  title     = {Automated Planning: Theory and Practice},
  author    = {Ghallab, Malik and Nau, Dana and Traverso, Paolo},
  publisher = {Morgan Kaufmann},
  year      = {2004}
}

@book{geffner2013concise,
  title     = {A Concise Introduction to Models and Methods for Automated Planning},
  author    = {Geffner, Hector and Bonet, Blai},
  publisher = {Morgan \& Claypool},
  series    = {Synthesis Lectures on Artificial Intelligence and Machine Learning},
  year      = {2013}
}

@techreport{mcdermott1998pddl,
  title       = {{PDDL}---The Planning Domain Definition Language, Version 1.2},
  author      = {McDermott, Drew and Ghallab, Malik and Howe, Adele and Knoblock, Craig and Ram, Ashwin and Veloso, Manuela and Weld, Daniel and Wilkins, David},
  institution = {Yale Center for Computational Vision and Control},
  number      = {CVC TR-98-003},
  year        = {1998}
}

@article{fox2003pddl21,
  title   = {{PDDL2.1}: An Extension to {PDDL} for Expressing Temporal Planning Domains},
  author  = {Fox, Maria and Long, Derek},
  journal = {Journal of Artificial Intelligence Research},
  volume  = {20},
  pages   = {61--124},
  year    = {2003}
}

@inproceedings{howey2004val,
  title     = {{VAL}: Automatic Plan Validation, Continuous Effects and Mixed Initiative Planning Using {PDDL}},
  author    = {Howey, Richard and Long, Derek and Fox, Maria},
  booktitle = {Proceedings of the 16th IEEE International Conference on Tools with Artificial Intelligence (ICTAI)},
  pages     = {294--301},
  year      = {2004},
  doi       = {10.1109/ICTAI.2004.120},
  publisher = {IEEE Computer Society},
}

@inproceedings{eriksson2017unsolvability,
  title     = {Unsolvability Certificates for Classical Planning},
  author    = {Eriksson, Salom{\'e} and R{\"o}ger, Gabriele and Helmert, Malte},
  booktitle = {Proceedings of the 27th International Conference on Automated Planning and Scheduling (ICAPS)},
  year      = {2017},
  doi       = {10.1609/icaps.v27i1.13818}
}

@book{bratman1987intention,
  title     = {Intention, Plans, and Practical Reason},
  author    = {Bratman, Michael E.},
  publisher = {Harvard University Press},
  address   = {Cambridge, MA},
  year      = {1987}
}

@book{wooldridge2009introduction,
  title     = {An Introduction to MultiAgent Systems},
  author    = {Wooldridge, Michael},
  edition   = {2nd},
  publisher = {John Wiley \& Sons},
  year      = {2009}
}

@article{meneguzzi2015planning,
  title     = {Planning in {BDI} Agents: A Survey of the Integration of Planning Algorithms and Agent Reasoning},
  author    = {Meneguzzi, Felipe and de Silva, Lavindra},
  journal   = {The Knowledge Engineering Review},
  volume    = {30},
  number    = {1},
  pages     = {1--44},
  year      = {2015},
  doi       = {10.1017/S0269888913000337}
}

@inproceedings{valmeekam2023planning,
  title     = {On the Planning Abilities of Large Language Models---A Critical Investigation},
  author    = {Valmeekam, Karthik and Marquez, Matthew and Sreedharan, Sarath and Kambhampati, Subbarao},
  booktitle = {Advances in Neural Information Processing Systems},
  volume    = {36},
  year      = {2023},
  publisher = {Curran Associates, Inc.},
}

@inproceedings{valmeekam2023planbench,
  title     = {{PlanBench}: An Extensible Benchmark for Evaluating Large Language Models on Planning and Reasoning about Change},
  author    = {Valmeekam, Karthik and Marquez, Matthew and Olmo, Alberto and Sreedharan, Sarath and Kambhampati, Subbarao},
  booktitle = {Advances in Neural Information Processing Systems (Datasets and Benchmarks Track)},
  volume    = {36},
  year      = {2023},
  publisher = {Curran Associates, Inc.},
}

@misc{valmeekam2024lrm,
  title         = {{LLMs} Still Can't Plan; Can {LRMs}? {A} Preliminary Evaluation of {OpenAI's} o1 on {PlanBench}},
  author        = {Valmeekam, Karthik and Stechly, Kaya and Kambhampati, Subbarao},
  year          = {2024},
  eprint        = {2409.13373},
  archivePrefix = {arXiv}
}

@inproceedings{kambhampati2024llmmodulo,
  title     = {Position: {LLMs} Can't Plan, But Can Help Planning in {LLM-Modulo} Frameworks},
  author    = {Kambhampati, Subbarao and Valmeekam, Karthik and Guan, Lin and Verma, Mudit and Stechly, Kaya and Bhambri, Siddhant and Saldyt, Lucas and Murthy, Anil},
  booktitle = {Proceedings of the 41st International Conference on Machine Learning (ICML)},
  series    = {Proceedings of Machine Learning Research},
  volume    = {235},
  year      = {2024},
  publisher = {PMLR},
}

@inproceedings{stechly2024thoughtlessness,
  title     = {Chain of Thoughtlessness? {An} Analysis of {CoT} in Planning},
  author    = {Stechly, Kaya and Valmeekam, Karthik and Kambhampati, Subbarao},
  booktitle = {Advances in Neural Information Processing Systems},
  volume    = {37},
  year      = {2024},
  publisher = {Curran Associates, Inc.},
}

@inproceedings{stechly2025selfverification,
  title     = {On the Self-Verification Limitations of Large Language Models on Reasoning and Planning Tasks},
  author    = {Stechly, Kaya and Valmeekam, Karthik and Kambhampati, Subbarao},
  booktitle = {International Conference on Learning Representations (ICLR)},
  year      = {2025},
  publisher = {OpenReview.net},
}

@article{verma2024brittle,
  title   = {Do Think Tags Really Help {LLMs} Plan? {A} Critical Evaluation of {ReAct}-Style Prompting},
  author  = {Bhambri, Siddhant and Verma, Mudit and Kambhampati, Subbarao},
  journal = {Transactions on Machine Learning Research},
  year    = {2025},
  url     = {https://openreview.net/forum?id=aFAMPSmNHR},
}

@inproceedings{stein2025autoplanbench,
  title     = {Automating the Generation of Prompts for {LLM}-based Action Choice in {PDDL} Planning},
  author    = {Stein, Katharina and Fi{\v{s}}er, Daniel and Hoffmann, J{\"o}rg and Koller, Alexander},
  booktitle = {Proceedings of the 35th International Conference on Automated Planning and Scheduling (ICAPS)},
  year      = {2025},
  publisher = {AAAI Press},
}

@inproceedings{kokel2025acpbench,
  title     = {{ACPBench}: Reasoning About Action, Change, and Planning},
  author    = {Kokel, Harsha and Katz, Michael and Srinivas, Kavitha and Sohrabi, Shirin},
  booktitle = {Proceedings of the 39th AAAI Conference on Artificial Intelligence},
  volume    = {39},
  number    = {25},
  pages     = {26559--26568},
  year      = {2025},
  doi       = {10.1609/aaai.v39i25.34857},
  publisher = {AAAI Press},
}

@misc{gobel2026stepwise,
  title         = {Agentic {LLM} Planning via Step-Wise {PDDL} Simulation: An Empirical Characterisation},
  author        = {G{\"o}bel, Kai and Lorang, Pierrick and Zips, Patrik and Gl{\"u}ck, Tobias},
  year          = {2026},
  eprint        = {2603.06064},
  archivePrefix = {arXiv}
}

@misc{sun2026apb,
  title         = {Agent Planning Benchmark: A Diagnostic Framework for Planning Capabilities in {LLM} Agents},
  author        = {Sun, Haoyu and Wang, Wenxuan and Song, Mingyang and He, Jujie and Zhang, Weinan and Liu, Yang and Yang, Yang and Cheng, Yu},
  year          = {2026},
  eprint        = {2606.04874},
  archivePrefix = {arXiv}
}

@inproceedings{silver2024generalized,
  title     = {Generalized Planning in {PDDL} Domains with Pretrained Large Language Models},
  author    = {Silver, Tom and Dan, Soham and Srinivas, Kavitha and Tenenbaum, Joshua B. and Kaelbling, Leslie and Katz, Michael},
  booktitle = {Proceedings of the 38th AAAI Conference on Artificial Intelligence},
  volume    = {38},
  number    = {18},
  year      = {2024},
  doi       = {10.1609/aaai.v38i18.30006}
}

@inproceedings{wei2022cot,
  title     = {Chain-of-Thought Prompting Elicits Reasoning in Large Language Models},
  author    = {Wei, Jason and Wang, Xuezhi and Schuurmans, Dale and Bosma, Maarten and Ichter, Brian and Xia, Fei and Chi, Ed and Le, Quoc and Zhou, Denny},
  booktitle = {Advances in Neural Information Processing Systems},
  volume    = {35},
  year      = {2022},
  publisher = {Curran Associates, Inc.},
}

@inproceedings{kojima2022zeroshot,
  title     = {Large Language Models are Zero-Shot Reasoners},
  author    = {Kojima, Takeshi and Gu, Shixiang Shane and Reid, Machel and Matsuo, Yutaka and Iwasawa, Yusuke},
  booktitle = {Advances in Neural Information Processing Systems},
  volume    = {35},
  year      = {2022},
  publisher = {Curran Associates, Inc.},
}

@inproceedings{wang2023planandsolve,
  title     = {Plan-and-Solve Prompting: Improving Zero-Shot Chain-of-Thought Reasoning by Large Language Models},
  author    = {Wang, Lei and Xu, Wanyu and Lan, Yihuai and Hu, Zhiqiang and Lan, Yunshi and Lee, Roy Ka-Wei and Lim, Ee-Peng},
  booktitle = {Proceedings of the 61st Annual Meeting of the Association for Computational Linguistics (ACL)},
  year      = {2023}
}

@inproceedings{zheng2024stepback,
  title     = {Take a Step Back: Evoking Reasoning via Abstraction in Large Language Models},
  author    = {Zheng, Huaixiu Steven and Mishra, Swaroop and Chen, Xinyun and Cheng, Heng-Tze and Chi, Ed H. and Le, Quoc V. and Zhou, Denny},
  booktitle = {International Conference on Learning Representations (ICLR)},
  year      = {2024},
  publisher = {OpenReview.net},
}

@inproceedings{madaan2023selfrefine,
  title     = {Self-Refine: Iterative Refinement with Self-Feedback},
  author    = {Madaan, Aman and Tandon, Niket and Gupta, Prakhar and Hallinan, Skyler and Gao, Luyu and Wiegreffe, Sarah and Alon, Uri and Dziri, Nouha and Prabhumoye, Shrimai and Yang, Yiming and Gupta, Shashank and Majumder, Bodhisattwa Prasad and Hermann, Katherine and Welleck, Sean and Yazdanbakhsh, Amir and Clark, Peter},
  booktitle = {Advances in Neural Information Processing Systems},
  volume    = {36},
  year      = {2023},
  publisher = {Curran Associates, Inc.},
}

@inproceedings{yao2023tot,
  title     = {Tree of Thoughts: Deliberate Problem Solving with Large Language Models},
  author    = {Yao, Shunyu and Yu, Dian and Zhao, Jeffrey and Shafran, Izhak and Griffiths, Thomas L. and Cao, Yuan and Narasimhan, Karthik},
  booktitle = {Advances in Neural Information Processing Systems},
  volume    = {36},
  year      = {2023},
  publisher = {Curran Associates, Inc.},
}

@inproceedings{yao2023react,
  title     = {{ReAct}: Synergizing Reasoning and Acting in Language Models},
  author    = {Yao, Shunyu and Zhao, Jeffrey and Yu, Dian and Du, Nan and Shafran, Izhak and Narasimhan, Karthik and Cao, Yuan},
  booktitle = {International Conference on Learning Representations (ICLR)},
  year      = {2023},
  publisher = {OpenReview.net},
}

@inproceedings{shinn2023reflexion,
  title     = {Reflexion: Language Agents with Verbal Reinforcement Learning},
  author    = {Shinn, Noah and Cassano, Federico and Gopinath, Ashwin and Narasimhan, Karthik and Yao, Shunyu},
  booktitle = {Advances in Neural Information Processing Systems},
  volume    = {36},
  year      = {2023},
  publisher = {Curran Associates, Inc.},
}

@inproceedings{gou2024critic,
  title     = {{CRITIC}: Large Language Models Can Self-Correct with Tool-Interactive Critiquing},
  author    = {Gou, Zhibin and Shao, Zhihong and Gong, Yeyun and Shen, Yelong and Yang, Yujiu and Duan, Nan and Chen, Weizhu},
  booktitle = {International Conference on Learning Representations (ICLR)},
  year      = {2024},
  publisher = {OpenReview.net},
}

@inproceedings{huang2024selfcorrect,
  title     = {Large Language Models Cannot Self-Correct Reasoning Yet},
  author    = {Huang, Jie and Chen, Xinyun and Mishra, Swaroop and Zheng, Huaixiu Steven and Yu, Adams Wei and Song, Xinying and Zhou, Denny},
  booktitle = {International Conference on Learning Representations (ICLR)},
  year      = {2024},
  publisher = {OpenReview.net},
}

@inproceedings{khattab2024dspy,
  title     = {{DSPy}: Compiling Declarative Language Model Calls into State-of-the-Art Pipelines},
  author    = {Khattab, Omar and Singhvi, Arnav and Maheshwari, Paridhi and Zhang, Zhiyuan and Santhanam, Keshav and Vardhamanan, Sri and Haq, Saiful and Sharma, Ashutosh and Joshi, Thomas T. and Moazam, Hanna and Miller, Heather and Zaharia, Matei and Potts, Christopher},
  booktitle = {International Conference on Learning Representations (ICLR)},
  year      = {2024}
}

@inproceedings{liu2024agentbench,
  title     = {{AgentBench}: Evaluating {LLMs} as Agents},
  author    = {Liu, Xiao and Yu, Hao and Zhang, Hanchen and Xu, Yifan and Lei, Xuanyu and Lai, Hanyu and Gu, Yu and Ding, Hangliang and Men, Kaiwen and Yang, Kejuan and Zhang, Shudan and Deng, Xiang and Zeng, Aohan and Du, Zhengxiao and Zhang, Chenhui and Shen, Sheng and Zhang, Tianjun and Su, Yu and Sun, Huan and Huang, Minlie and Dong, Yuxiao and Tang, Jie},
  booktitle = {International Conference on Learning Representations (ICLR)},
  year      = {2024},
  publisher = {OpenReview.net},
}

@inproceedings{zhou2024webarena,
  title     = {{WebArena}: A Realistic Web Environment for Building Autonomous Agents},
  author    = {Zhou, Shuyan and Xu, Frank F. and Zhu, Hao and Zhou, Xuhui and Lo, Robert and Sridhar, Abishek and Cheng, Xianyi and Ou, Tianyue and Bisk, Yonatan and Fried, Daniel and Alon, Uri and Neubig, Graham},
  booktitle = {International Conference on Learning Representations (ICLR)},
  year      = {2024},
  publisher = {OpenReview.net},
}

@inproceedings{xie2024osworld,
  title     = {{OSWorld}: Benchmarking Multimodal Agents for Open-Ended Tasks in Real Computer Environments},
  author    = {Xie, Tianbao and Zhang, Danyang and Chen, Jixuan and Li, Xiaochuan and Zhao, Siheng and Cao, Ruisheng and Hua, Toh Jing and Cheng, Zhoujun and Shin, Dongchan and Lei, Fangyu and Liu, Yitao and Xu, Yiheng and Zhou, Shuyan and Savarese, Silvio and Xiong, Caiming and Zhong, Victor and Yu, Tao},
  booktitle = {Advances in Neural Information Processing Systems (Datasets and Benchmarks Track)},
  volume    = {37},
  year      = {2024},
  publisher = {Curran Associates, Inc.},
}

@inproceedings{jimenez2024swebench,
  title     = {{SWE}-bench: Can Language Models Resolve Real-World {GitHub} Issues?},
  author    = {Jimenez, Carlos E. and Yang, John and Wettig, Alexander and Yao, Shunyu and Pei, Kexin and Press, Ofir and Narasimhan, Karthik},
  booktitle = {International Conference on Learning Representations (ICLR)},
  year      = {2024},
  publisher = {OpenReview.net},
}

@inproceedings{mialon2024gaia,
  title     = {{GAIA}: A Benchmark for General {AI} Assistants},
  author    = {Mialon, Gr{\'e}goire and Fourrier, Cl{\'e}mentine and Swift, Craig and Wolf, Thomas and LeCun, Yann and Scialom, Thomas},
  booktitle = {International Conference on Learning Representations (ICLR)},
  year      = {2024},
  publisher = {OpenReview.net},
}

@inproceedings{shridhar2021alfworld,
  title     = {{ALFWorld}: Aligning Text and Embodied Environments for Interactive Learning},
  author    = {Shridhar, Mohit and Yuan, Xingdi and C{\^o}t{\'e}, Marc-Alexandre and Bisk, Yonatan and Trischler, Adam and Hausknecht, Matthew},
  booktitle = {International Conference on Learning Representations (ICLR)},
  year      = {2021},
  publisher = {OpenReview.net},
}

@inproceedings{ma2024agentboard,
  title     = {{AgentBoard}: An Analytical Evaluation Board of Multi-turn {LLM} Agents},
  author    = {Ma, Chang and Zhang, Junlei and Zhu, Zhihao and Yang, Cheng and Yang, Yujiu and Jin, Yaohui and Lan, Zhenzhong and Kong, Lingpeng and He, Junxian},
  booktitle = {Advances in Neural Information Processing Systems (Datasets and Benchmarks Track)},
  volume    = {37},
  year      = {2024},
  publisher = {Curran Associates, Inc.},
}

@inproceedings{yehudai2025surveyagents,
  title     = {A Survey on Evaluation of {LLM}-based Agents},
  author    = {Yehudai, Asaf and Eden, Lilach and Li, Alan and Uziel, Guy and Zhao, Yilun and Bar-Haim, Roy and Cohan, Arman and Shmueli-Scheuer, Michal},
  booktitle = {Findings of the Association for Computational Linguistics: ACL 2026},
  pages     = {26690--26714},
  year      = {2026},
  publisher = {Association for Computational Linguistics},
  address   = {San Diego, California, United States},
  doi       = {10.18653/v1/2026.findings-acl.1330},
}

@article{kapoor2025agents,
  title   = {{AI} Agents That Matter},
  author  = {Kapoor, Sayash and Stroebl, Benedikt and Siegel, Zachary S. and Nadgir, Nitya and Narayanan, Arvind},
  journal = {Transactions on Machine Learning Research},
  year    = {2025}
}

@inproceedings{zheng2023judging,
  title     = {Judging {LLM}-as-a-Judge with {MT-Bench} and {Chatbot Arena}},
  author    = {Zheng, Lianmin and Chiang, Wei-Lin and Sheng, Ying and Zhuang, Siyuan and Wu, Zhanghao and Zhuang, Yonghao and Lin, Zi and Li, Zhuohan and Li, Dacheng and Xing, Eric P. and Zhang, Hao and Gonzalez, Joseph E. and Stoica, Ion},
  booktitle = {Advances in Neural Information Processing Systems (Datasets and Benchmarks Track)},
  volume    = {36},
  year      = {2023},
  publisher = {Curran Associates, Inc.},
}

@inproceedings{thakur2025judging,
  title     = {Judging the Judges: Evaluating Alignment and Vulnerabilities in {LLMs}-as-Judges},
  author    = {Thakur, Aman Singh and Choudhary, Kartik and Ramayapally, Venkat Srinik and Vaidyanathan, Sankaran and Hupkes, Dieuwke},
  booktitle = {Proceedings of the Fourth Workshop on Generation, Evaluation and Metrics (GEM)},
  pages     = {404--430},
  publisher = {Association for Computational Linguistics},
  year      = {2025}
}

@inproceedings{turpin2023unfaithful,
  title     = {Language Models Don't Always Say What They Think: Unfaithful Explanations in Chain-of-Thought Prompting},
  author    = {Turpin, Miles and Michael, Julian and Perez, Ethan and Bowman, Samuel R.},
  booktitle = {Advances in Neural Information Processing Systems},
  volume    = {36},
  year      = {2023},
  publisher = {Curran Associates, Inc.},
}

@inproceedings{kirichenko2025abstentionbench,
  title     = {{AbstentionBench}: Reasoning {LLMs} Fail on Unanswerable Questions},
  author    = {Kirichenko, Polina and Ibrahim, Mark and Chaudhuri, Kamalika and Bell, Samuel J.},
  booktitle = {Advances in Neural Information Processing Systems},
  year      = {2025},
  publisher = {Curran Associates, Inc.},
}

@article{brenner2009continual,
  author    = {Brenner, Michael and Nebel, Bernhard},
  title     = {Continual Planning and Acting in Dynamic Multiagent Environments},
  journal   = {Autonomous Agents and Multi-Agent Systems},
  volume    = {19},
  number    = {3},
  pages     = {297--331},
  year      = {2009},
  publisher = {Springer}
}

@inproceedings{kinny1991commitment,
  author    = {Kinny, David N. and Georgeff, Michael P.},
  title     = {Commitment and Effectiveness of Situated Agents},
  booktitle = {Proceedings of the 12th International Joint Conference on Artificial Intelligence (IJCAI)},
  pages     = {82--88},
  year      = {1991},
  publisher = {Morgan Kaufmann}
}

@article{cohen1990intention,
  author    = {Cohen, Philip R. and Levesque, Hector J.},
  title     = {Intention is Choice with Commitment},
  journal   = {Artificial Intelligence},
  volume    = {42},
  number    = {2--3},
  pages     = {213--261},
  year      = {1990}
}

@article{scherl2003knowledge,
  author    = {Scherl, Richard B. and Levesque, Hector J.},
  title     = {Knowledge, Action, and the Frame Problem},
  journal   = {Artificial Intelligence},
  volume    = {144},
  number    = {1--2},
  pages     = {1--39},
  year      = {2003}
}

@inproceedings{katz2024thought,
  author    = {Katz, Michael and Kokel, Harsha and Srinivas, Kavitha and Sohrabi, Shirin},
  title     = {Thought of Search: Planning with Language Models Through The Lens of Efficiency},
  booktitle = {Advances in Neural Information Processing Systems},
  volume    = {37},
  year      = {2024}
}

@inproceedings{guan2023worldmodels,
  author    = {Guan, Lin and Valmeekam, Karthik and Sreedharan, Sarath and Kambhampati, Subbarao},
  title     = {Leveraging Pre-trained Large Language Models to Construct and Utilize World Models for Model-based Task Planning},
  booktitle = {Advances in Neural Information Processing Systems},
  volume    = {36},
  year      = {2023}
}

@misc{liu2023llmp,
  author        = {Liu, Bo and Jiang, Yuqian and Zhang, Xiaohan and Liu, Qiang and Zhang, Shiqi and Biswas, Joydeep and Stone, Peter},
  title         = {{LLM+P}: Empowering Large Language Models with Optimal Planning Proficiency},
  year          = {2023},
  eprint        = {2304.11477},
  archivePrefix = {arXiv}
}

@inproceedings{dagan2025plancraft,
  title     = {Plancraft: An Evaluation Dataset for Planning with {LLM} Agents},
  author    = {Dagan, Gautier and Keller, Frank and Lascarides, Alex},
  booktitle = {Proceedings of the Second Conference on Language Modeling (COLM)},
  year      = {2025},
}

@article{mcdermott2000aips98,
  title   = {The 1998 {AI} Planning Systems Competition},
  author  = {McDermott, Drew M.},
  journal = {AI Magazine},
  volume  = {21},
  number  = {2},
  year    = {2000},
  doi     = {10.1609/aimag.v21i2.1506},
}

\onecolumn
\appendix
\section*{Appendices}
\makeatletter
\def\input@path{{appendix/}}
\makeatother
\graphicspath{{appendix/}}
\lstset{inputpath=appendix/}
\counterwithin{table}{section}
\counterwithin{figure}{section}
\drafttrue

\newcommand{\offlineoutcome}{The plan has 4 actions and is valid; the optimal length is 4.}
\newcommand{\onlineoutcome}{The agent makes 8 calls, 4 of them executed actions, and the environment accepts its goal claim; the committed plan has 4 actions (optimal length 4).}

This document complements the paper. Appendix~\ref{app:domains} lists the five planning domains in PDDL. Appendix~\ref{app:tasks} gives two example tasks from the new domains, with the natural-language briefs that offline planners receive. Appendix~\ref{app:online} shows the online interface, an offline run and an online episode on one of these tasks. Appendix~\ref{app:prompts} gives every prompt: those that generated the briefs, those of the six prompting strategies and those of the three agent architectures. Appendix~\ref{app:tokens} reports token use and cost per model, and Appendix~\ref{app:perdomain} reports results per domain and task. The accompanying archive contains the code and all 50 tasks.

{\small\setcounter{tocdepth}{2}\renewcommand{\contentsname}{\protect\normalsize Contents}\tableofcontents}

\section{Planning domains}
\label{app:domains}

The domain files below are those used by the validator and by the agent environment, with comments removed. Plan quality is plan length in every domain: each action in the four typed domains increases \texttt{total-cost} by one, and \dom{Blocksworld} has no cost function. \dom{Blocksworld} and \dom{Logistics} follow the standard encodings; \dom{Gripper} adds a single condition, that the robot cannot enter a locked room, which only Gripper task~9 uses. In these three domains, object names in the problem files are random identifiers (Section~4 of the paper). The problem files of all 50 tasks are in the archive, under \texttt{examples/<domain>/problems\_pddl/}, together with their briefs (\texttt{examples/<domain>/prompts.json}).

In the listings, PDDL keywords are shown in \textcolor{accent}{\textbf{blue}} and variables in \textcolor{vermilion}{vermilion}.

\subsection{Blocksworld}
Blocks are untyped. A single arm picks up a clear block from the table, unstacks a clear block from another block, puts the block it holds down on the table or stacks it on a clear block; it holds at most one block at a time.
\pddlfile{Blocksworld domain}{gen/pddl_domain_blocks.txt}

\subsection{Gripper}
A robot with two grippers moves balls between rooms. Each gripper holds at most one ball. The robot can move from its room to any room that is not locked.
\pddlfile{Gripper domain}{gen/pddl_domain_gripper.txt}

\subsection{Logistics}
Packages are carried by trucks between locations of the same city and by aeroplanes between airports. Trucks are confined to their city; aeroplanes fly only between airports.
\pddlfile{Logistics domain}{gen/pddl_domain_logistics.txt}

\subsection{LoadBalancing}
Traffic buckets are assigned to servers. A bucket can be assigned only to an online server in its region that runs the bucket's service and has spare capacity; numeric fluents hold the load and the capacity of each server. A server can be powered off only when its load is zero, and services can be started and stopped on online servers. The capacity condition is written as $\mathit{load}+1\le\mathit{capacity}$, which the validator evaluates exactly; for the integer loads that arise, it is equivalent to $\mathit{load}<\mathit{capacity}$.
\pddlfile{LoadBalancing domain}{gen/pddl_domain_balancer.txt}

\subsection{Hospital}
Waiting patients are admitted to free beds in the ward they require, and patients who require isolation need an isolation bed. A bed must also be clean and unblocked to receive a patient. Transfers and discharges leave the vacated bed dirty, and a dirty bed must be cleaned before it is free again. Free beds can be blocked, for example for maintenance, and unblocked.
\pddlfile{Hospital domain}{gen/pddl_domain_hospital.txt}

\section{Example tasks}
\label{app:tasks}

We illustrate the two new domains with their first tasks. For each, we give the problem file (comments removed), an optimal plan, which the validator accepts, and the two briefs that offline planners receive. The briefs were generated from the PDDL files with the prompts of Appendix~\ref{app:briefprompts} and checked by hand.

\subsection{LoadBalancing, task 1}
\label{app:lb1}
\pddlfile{Problem file}{gen/pddl_problem_balancer_1.txt}
\begin{tcolorbox}[appbox, colframe=executed, colback=executed!4, title={Optimal plan (4 actions)}]
\begin{lstlisting}[style=pddl]
(unassign-bucket b2 s2_eu)
(assign-bucket b2 s1_eu eu_west search)
(assign-bucket b3 s1_eu eu_west search)
(power-off s2_eu)
\end{lstlisting}
\end{tcolorbox}
\brieffile{Domain brief}{gen/brief_domain_balancer.txt}
\brieffile{Instance brief}{gen/brief_instance_balancer_1.txt}

\subsection{Hospital, task 1}
\pddlfile{Problem file}{gen/pddl_problem_hospital_1.txt}
\begin{tcolorbox}[appbox, colframe=executed, colback=executed!4, title={Optimal plan (2 actions)}]
\begin{lstlisting}[style=pddl]
(admit p1 b1 w_med)
(admit p2 b2 w_med)
\end{lstlisting}
\end{tcolorbox}
\brieffile{Domain brief}{gen/brief_domain_hospital.txt}
\brieffile{Instance brief}{gen/brief_instance_hospital_1.txt}

\section{The online environment and example runs}
\label{app:online}

\subsection{The online interface}
An online agent receives the instruction in Appendix~\ref{app:agentprompts} and acts through nine tools. Seven are epistemic and leave the state unchanged: \tool{get\_task\_overview}, \tool{list\_objects}, \tool{describe\_object}, \tool{list\_actions}, \tool{describe\_action}, \tool{get\_state} and \tool{get\_history}. The ontic tool \tool{act} executes one grounded action, and the terminal tool \tool{submit} ends the episode, either with the claim that the goal is reached or, as \tool{submit(unsolvable=True)}, with the claim that the task is unsolvable. Every call consumes one unit of the budget (Definition~3.3 of the paper). The ReAct and Reflexion actors also see the \tool{finish} tool that DSPy's ReAct module adds (Appendix~\ref{app:agentprompts}); the harness treats a call to it as a goal claim, like \tool{submit()}.

Below are the observations that the tools return on LoadBalancing task~1. Box colours mark the kind of call:
\begin{center}\legendchip{epistemic}{epistemic}\quad\legendchip{executed}{executed action}\quad\legendchip{illformed}{ill-formed call}\quad\legendchip{fatal}{precondition violation}\quad\legendchip{terminal}{terminal}\end{center}

\subsubsection{At the start of the episode}
\toolcall{epistemic}{get\_task\_overview()}{gen/interface/overview.txt}
\toolcall{epistemic}{list\_objects()}{gen/interface/objects.txt}
\toolcall{epistemic}{list\_actions()}{gen/interface/actions.txt}
\toolcall{epistemic}{describe\_object('s1\_eu')}{gen/interface/object.txt}
\toolcall{epistemic}{get\_state()}{gen/interface/state.txt}

\subsubsection{Acting} The call with the unknown object \texttt{b4} is ill-formed: it is rejected, the state is unchanged and the episode continues. The second call executes an applicable action. A goal claim ends the episode, and the environment reports whether the executed plan reaches the goal; here, after one action, it does not.
\toolcall{illformed}{act('assign-bucket b4 s1\_eu eu\_west search')}{gen/interface/illformed.txt}
\toolcall{executed}{act('assign-bucket b3 s1\_eu eu\_west search')}{gen/interface/executed.txt}
\toolcall{epistemic}{get\_history()}{gen/interface/history.txt}
\toolcall{terminal}{submit()}{gen/interface/submit.txt}

\subsubsection{An inapplicable action} In a new episode on the same task, powering off \texttt{s2\_eu} is inapplicable, because the server still has load~1. The call ends the episode with a precondition violation.
\toolcall{fatal}{act('power-off s2\_eu')}{gen/interface/fatal.txt}

\subsection{An offline run}
\label{app:offline-run}
GPT-4o with direct prompting on LoadBalancing task~1. The prompt is the Direct prompt of Appendix~\ref{app:strategyprompts} with the briefs of Appendix~\ref{app:lb1}. \offlineoutcome
\plainfile{Model response}{gen/example_offline_response.txt}
\pddlfile{Parsed plan}{gen/example_offline_plan.txt}
\plainfile{Validator report (VAL, excerpt)}{gen/example_offline_val.txt}

\subsection{An online episode}
\label{app:online-episode}
GPT-4o with ReAct on LoadBalancing task~1. Each step shows the agent's reasoning (in italics), its tool call and the observation it receives, coloured as in Appendix~C.1. \onlineoutcome
\stepbox{epistemic}{1}{get\_task\_overview}{To begin solving the episode, I need to understand the task at hand. I will start by getting an overview of the task.}{get\_task\_overview()}{gen/episode/obs_01.txt}
\stepbox{epistemic}{2}{list\_actions}{I now understand the task: consolidate traffic onto a single server in eu\_west and decommission the other server. I need to know the available actions to plan the steps for achieving this goal.}{list\_actions()}{gen/episode/obs_02.txt}
\stepbox{epistemic}{3}{get\_state}{I have the list of actions available. To consolidate traffic onto a single server, I need to unassign buckets from one server and reassign them to the other. I should check the current state to see the load on each server and which buckets are assigned where.}{get\_state(max\_facts=200)}{gen/episode/obs_03.txt}
\stepbox{executed}{4}{act}{The current state shows that b1 is assigned to s1\_eu, b2 is assigned to s2\_eu, and b3 is unassigned. Both servers are online and running the search service. To consolidate traffic onto a single server, I can unassign b2 from s2\_eu and assign it to s1\_eu, then assign b3 to s1\_eu. This will allow me to power off s2\_eu.}{act(step\_text='unassign-bucket b2 s2\_eu')}{gen/episode/obs_04.txt}
\stepbox{executed}{5}{act}{I have successfully unassigned b2 from s2\_eu. The next step is to assign b2 to s1\_eu, which will consolidate the traffic onto a single server.}{act(step\_text='assign-bucket b2 s1\_eu eu\_west search')}{gen/episode/obs_05.txt}
\stepbox{executed}{6}{act}{I have successfully assigned b2 to s1\_eu. Now, I need to assign b3 to s1\_eu to complete the consolidation of traffic onto a single server. After that, I can power off s2\_eu.}{act(step\_text='assign-bucket b3 s1\_eu eu\_west search')}{gen/episode/obs_06.txt}
\stepbox{executed}{7}{act}{I have successfully assigned b3 to s1\_eu, consolidating all traffic onto a single server. Now, I can proceed to power off s2\_eu, as it has zero load.}{act(step\_text='power-off s2\_eu')}{gen/episode/obs_07.txt}
\stepbox{terminal}{8}{submit}{I have successfully consolidated all traffic onto a single server (s1\_eu) and powered off the other server (s2\_eu) as required by the task. The goal has been achieved, and I can now submit the solution.}{submit(unsolvable=False)}{gen/episode/obs_08.txt}

\section{Prompts}
\label{app:prompts}

\subsection{Brief generation}
\label{app:briefprompts}
The briefs were generated with GPT-5 (\texttt{gpt-5-2025-08-07}, minimal reasoning effort, temperature~1) in two stages, with the system message ``You are a helpful assistant.'' First, the domain brief is generated from the domain file; then each instance brief is generated from its problem file and the domain brief. The placeholders in double braces are replaced by the file contents and by the generated domain brief.
\plainfile{Prompt for the domain brief}{gen/prompt_domain_prompt_template.txt}
\plainfile{Prompt for an instance brief}{gen/prompt_instance_prompt_template.txt}

\subsection{Prompting strategies}
\label{app:strategyprompts}
Every prompt begins with the same context, which presents the two briefs, and the final call of every strategy ends with the same output requirements. In the prompts below, \textcolor{placeholder}{\texttt{\textbf{<CONTEXT>}}} and \textcolor{placeholder}{\texttt{\textbf{<OUTPUT REQUIREMENTS>}}} stand for these two blocks, \textcolor{placeholder}{\texttt{\textbf{<DOMAIN BRIEF>}}} and \textcolor{placeholder}{\texttt{\textbf{<INSTANCE BRIEF>}}} for the briefs, and other terms in angle brackets for outputs of earlier calls. The final plan is taken from the last fenced block of the final call. Tree of Thoughts keeps a beam of two prefixes; at each of six depths, it asks for three candidate continuations of every prefix in the beam and scores them in a separate call, adding each score to the score of its prefix, and finally completes the best prefix. Self-Refine stops early when the critique is exactly \texttt{NO\_ISSUES} or the revised plan equals the previous one.
\promptfile{Context}{shared by every call}{gen/prompt_offline_context.txt}
\promptfile{Output requirements}{end of every final call}{gen/prompt_offline_output_requirements.txt}

\minihead{Direct}
\promptfile{Final call}{temperature 0 \textperiodcentered{} at most 4{,}096 output tokens}{gen/prompt_offline_baseline_1.txt}
\minihead{Zero-shot CoT}
\promptfile{First call}{temperature 0 \textperiodcentered{} at most 4{,}096 output tokens}{gen/prompt_offline_zero_shot_cot_1.txt}
\promptfile{Final call}{temperature 0 \textperiodcentered{} at most 4{,}096 output tokens}{gen/prompt_offline_zero_shot_cot_2.txt}
\minihead{Plan-and-Solve}
\promptfile{First call}{temperature 0 \textperiodcentered{} at most 1{,}024 output tokens}{gen/prompt_offline_plan_and_solve_1.txt}
\promptfile{Final call}{temperature 0 \textperiodcentered{} at most 4{,}096 output tokens}{gen/prompt_offline_plan_and_solve_2.txt}
\minihead{Step-Back}
\promptfile{First call}{temperature 0 \textperiodcentered{} at most 1{,}024 output tokens}{gen/prompt_offline_step_back_1.txt}
\promptfile{Final call}{temperature 0 \textperiodcentered{} at most 4{,}096 output tokens}{gen/prompt_offline_step_back_2.txt}
\minihead{Self-Refine}
\promptfile{Draft}{temperature 0 \textperiodcentered{} at most 4{,}096 output tokens}{gen/prompt_offline_self_refine_1.txt}
\promptfile{Critique (up to three rounds)}{temperature 0 \textperiodcentered{} at most 1{,}024 output tokens}{gen/prompt_offline_self_refine_2.txt}
\promptfile{Revision}{temperature 0 \textperiodcentered{} at most 4{,}096 output tokens}{gen/prompt_offline_self_refine_3.txt}
\minihead{Tree of Thoughts}
\promptfile{Proposal (per prefix and depth)}{temperature 0.7 \textperiodcentered{} at most 900 output tokens}{gen/prompt_offline_tree_of_thought_1.txt}
\promptfile{Evaluation (per prefix and depth)}{temperature 0 \textperiodcentered{} at most 400 output tokens}{gen/prompt_offline_tree_of_thought_2.txt}
\promptfile{Completion}{temperature 0 \textperiodcentered{} at most 4{,}096 output tokens}{gen/prompt_offline_tree_of_thought_3.txt}

\subsection{Agent architectures}
\label{app:agentprompts}
The agents are DSPy modules. DSPy turns the signature of each module (its instruction and its input and output fields) into a system message, and the current values of the input fields into a user message; field markers such as \textcolor{accent}{\texttt{\textbf{[[ \#\# objective \#\# ]]}}} delimit the fields. Below are the messages of the first call of each module on LoadBalancing task~1, rendered with DSPy~3.0.4. The harness logs the exact messages of every call of every episode (\texttt{lm\_calls.json}). The three architectures give the acting model the same instruction (the numbered strategy in each actor's system message). The Reflexion actor additionally sees the lessons in its memory. The Critic actor, which is not a DSPy ReAct module, receives the list of tools with their arguments in its instruction; DSPy's ReAct module adds the same list automatically for the other two agents.

\minihead{ReAct}
\msgfile{System message}{gen/agent_react_system.txt}
\msgfile{User message of the first call}{gen/agent_react_user.txt}
\msgfile{User message of the second call: the trajectory so far is appended after every step}{gen/agent_react_user_second.txt}

\minihead{Reflexion}
\msgfile{Actor: system message}{gen/agent_reflexion_actor_system.txt}
\msgfile{Actor: user message of the first call}{gen/agent_reflexion_actor_user.txt}
\msgfile{Reflector (called after every ill-formed call): system message}{gen/agent_reflexion_reflector_system.txt}
\msgfile{Reflector: example user message}{gen/agent_reflexion_reflector_user.txt}

\minihead{Critic}
\msgfile{Actor: system message}{gen/agent_critic_actor_system.txt}
\msgfile{Actor: user message of the first call}{gen/agent_critic_actor_user.txt}
\msgfile{Critic (reviews every proposed act or submit call): system message}{gen/agent_critic_critic_system.txt}
\msgfile{Critic: example user message}{gen/agent_critic_critic_user.txt}

\section{Token use and cost per model}
\label{app:tokens}

Table~\ref{tab:tokens} gives the mean number of tokens (input and output) and the mean cost per episode of every configuration, over its 50 episodes. Costs use the list prices of Table~3 of the paper. Table~4 of the paper reports the token column averaged over the four models.

\begin{table}[h]
\centering\small
\caption{Mean tokens and cost per episode, per model and method.}
\label{tab:tokens}
\begin{tabular}{@{}l rrrr c rrrr@{}}
\toprule
& \multicolumn{4}{c}{\textsf{Tokens per episode (thousands)}} & & \multicolumn{4}{c}{\textsf{Cost per episode (US cents)}} \\
\cmidrule(lr){2-5}\cmidrule(l){7-10}
Method & GPT-4o-mini & GPT-4o & Flash-Lite & Flash & & GPT-4o-mini & GPT-4o & Flash-Lite & Flash \\
\midrule
\rowcolor{offlinetint}\multicolumn{10}{@{}l}{\emph{Offline (prompting strategies)}} \\
Direct & 1.4 & 1.4 & 1.5 & 1.6 & & 0.03 & 0.53 & 0.02 & 0.03 \\
Zero-shot CoT & 4.4 & 3.8 & 4.1 & 4.8 & & 0.12 & 1.65 & 0.05 & 0.09 \\
Plan-and-Solve & 2.9 & 2.8 & 2.9 & 3.2 & & 0.06 & 0.97 & 0.03 & 0.05 \\
Step-Back & 3.0 & 2.8 & 3.1 & 3.2 & & 0.07 & 1.00 & 0.04 & 0.05 \\
Self-Refine & 9.6 & 3.6 & 7.5 & 7.9 & & 0.20 & 1.12 & 0.08 & 0.11 \\
Tree of Thoughts & 30.0 & 31.0 & 32.1 & 35.9 & & 0.53 & 9.24 & 0.28 & 0.49 \\
\midrule
\rowcolor{onlinetint}\multicolumn{10}{@{}l}{\emph{Online (agent architectures)}} \\
ReAct & 19.7 & 40.2 & 15.5 & 27.0 & & 0.33 & 11.06 & 0.13 & 0.30 \\
Reflexion & 22.0 & 38.9 & 17.4 & 36.4 & & 0.37 & 10.68 & 0.15 & 0.40 \\
Critic & 44.3 & 39.1 & 44.7 & 52.3 & & 0.77 & 11.89 & 0.38 & 0.60 \\
\bottomrule
\end{tabular}

\end{table}

\section{Results per domain and task}
\label{app:perdomain}

Table~\ref{tab:domains} gives success, executability and detection per domain, pooled over the four models and over the six prompting strategies (offline) or the three agent architectures (online). Figure~\ref{fig:tasks} gives the success rate for each solvable task. As in Table~4 of the paper, success and executability are computed on the solvable tasks and detection on the unsolvable ones.

Online success is lower than offline success in every domain. For individual models, averaged over methods, it is lower in 17 of the 20 combinations of model and domain, and five of the 40 tasks are solved more often online. Difficulty does not follow plan length. \dom{Logistics} has the longest plans but the same offline success as \dom{Blocksworld}, and \dom{Gripper}, whose long plans repeat a simple pattern, is the second easiest domain offline. The new domains are not systematically harder than the classical ones: offline, \dom{LoadBalancing} is the easiest domain and \dom{Hospital} the hardest. No configuration solves \dom{Blocksworld} task~10 or \dom{Hospital} tasks~4 and~7 offline, and ten tasks are never solved online. Offline detections are concentrated on \dom{Gripper} task~1 (13 of the 22) and on \dom{Hospital} (7), and no offline episode recognises an unsolvable \dom{Blocksworld} task.

\begin{table}[h]
\centering\small
\caption{Results per domain, pooled over models and methods. Success and executability are computed on the eight solvable tasks of each domain (192 offline and 96 online episodes per domain) and detection on its two unsolvable tasks (48 and 24 episodes). \emph{Best configuration}: success of the best single configuration (model and method) on the solvable tasks of the domain. $\Lstar$: range of optimal plan lengths.}
\label{tab:domains}
\begin{tabular}{@{}l c rr rr rr rr@{}}
\toprule
& & \multicolumn{2}{c}{\textsf{Success (\%)}} & \multicolumn{2}{c}{\textsf{Best configuration (\%)}} & \multicolumn{2}{c}{\textsf{Executable (\%)}} & \multicolumn{2}{c}{\textsf{Detection (\%)}} \\
\cmidrule(lr){3-4}\cmidrule(lr){5-6}\cmidrule(lr){7-8}\cmidrule(l){9-10}
Domain & $\Lstar$ & Offline & Online & Offline & Online & Offline & Online & Offline & Online \\
\midrule
\dom{Blocksworld} & 4--14 & 30.2 & 7.3 & 62.5 & 37.5 & 33.3 & 17.7 & 0.0 & 8.3 \\
\dom{Gripper} & 5--29 & 46.4 & 37.5 & 100.0 & 100.0 & 47.9 & 38.5 & 27.1 & 0.0 \\
\dom{Logistics} & 9--63 & 30.2 & 14.6 & 100.0 & 50.0 & 41.1 & 32.3 & 2.1 & 0.0 \\
\dom{LoadBalancing} & 3--8 & 55.7 & 34.4 & 87.5 & 87.5 & 67.2 & 56.3 & 2.1 & 12.5 \\
\dom{Hospital} & 2--11 & 27.6 & 11.5 & 50.0 & 25.0 & 28.1 & 21.9 & 14.6 & 20.8 \\
\midrule
All domains & 2--63 & 38.0 & 21.0 & 77.5 & 40.0 & 43.5 & 33.3 & 9.2 & 8.3 \\
\bottomrule
\end{tabular}

\end{table}

\begin{figure}[h]
\centering
\includegraphics[width=\linewidth]{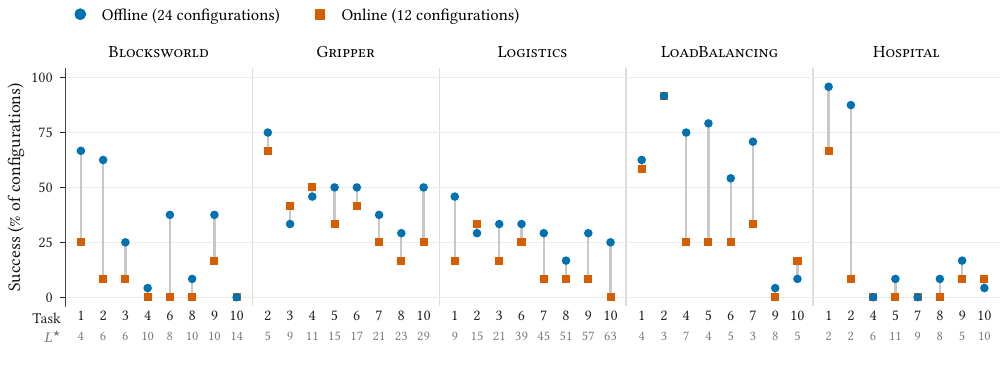}
\caption{Success on each solvable task: the share of the 24 offline configurations (four models and six strategies) and of the 12 online configurations (four models and three architectures) that solve it. The second row of labels gives the optimal plan length $\Lstar$; unsolvable tasks are omitted.}
\label{fig:tasks}
\end{figure}

\section{Use of AI tools}
\label{app:ai}

GPT-5 (\texttt{gpt-5-2025-08-07}) generated the natural-language briefs from the PDDL files, with the prompts of Appendix~\ref{app:briefprompts}; the authors checked the briefs by hand. An AI assistant (Claude Opus 5.5, Anthropic) was used to edit parts of the paper’s text from the authors’ own write-up of this work. The authors reviewed all such output and take full responsibility for the paper and the code.

\end{document}